\setlist{leftmargin=4.5mm}
\newcommand{\ssvsp}{\vspace{-0.2cm}}
\newcommand{\svsp}{\vspace{-0.2cm}}
\newcommand{\data}{\mathcal{D}}
\newcommand{\trdata}{\mathcal{D}}
\newcommand{\model}{\mathcal{M}}
\newcommand{\KL}{\text{KL}}
\DeclareMathOperator*{\argmax}{arg \ max}
\DeclareMathOperator*{\argmin}{arg \ min}
\theoremstyle{definition}
\newtheorem{theorem}{Theorem}[section]
\newtheorem{lemma}[theorem]{Lemma}
\title{A Bayesian Perspective on Training Speed \\ and Model Selection}
\author{%
  Clare Lyle \thanks{OATML Group, University of Oxford. Correspondence to \texttt{clare.lyle@cs.ox.ac.uk}} \\
   \And
   Lisa Schut$^\dag$ \\
   \AND
   Binxin Ru$^\dag$ \\
   \And
   Yarin Gal$^\dag$ \\
   \And
   Mark van der Wilk\thanks{Imperial College London} \\
}
\begin{document}

\maketitle

\begin{abstract}
We take a Bayesian perspective to illustrate a connection between training speed and the marginal likelihood in linear models. This provides two major insights: first, that a measure of a model's training speed can be used to estimate its marginal likelihood. Second, that this measure, under certain conditions, predicts the relative weighting of models in linear model combinations trained to minimize a regression loss. We verify our results in model selection tasks for linear models and for the infinite-width limit of deep neural networks. We further provide encouraging empirical evidence that the intuition developed in these settings also holds for deep neural networks trained with stochastic gradient descent. Our results suggest a promising new direction towards explaining why neural networks trained with stochastic gradient descent are biased towards functions that generalize well. 

\end{abstract}

\section{Introduction}
Choosing the right inductive bias for a machine learning model, such as convolutional structure for an image dataset, is critical for good generalization. The problem of \emph{model selection} concerns itself with identifying good inductive biases for a given dataset.
In Bayesian inference, the marginal likelihood (ML) provides a principled tool for model selection. In contrast to cross-validation, for which computing gradients is cumbersome, the ML can be conveniently maximised using gradients when its computation is feasible.
Unfortunately, computing the marginal likelihood for complex models such as neural networks is typically intractable. Workarounds such as variational inference suffer from expensive optimization of many parameters in the variational distribution and differ significantly from standard training methods for Deep Neural Networks (DNNs), which optimize a single parameter sample from initialization. A method for estimating the ML that closely follows standard optimization schemes would pave the way for new practical model selection procedures, yet remains an open problem.

A separate line of work aims to perform model selection by predicting a model's test set performance. This has led to theoretical and empirical results connecting training speed and generalization error \citep{hardt2015train, jiang2019fantastic}. This connection has yet to be fully explained, as most generalization bounds in the literature depend only on the final weights obtained by optimization, rather than on the trajectory taken during training, and therefore are unable to capture this relationship. Understanding the link between training speed, optimization and generalization thus presents a promising step towards developing a theory of generalization which can explain the empirical performance of neural networks.

In this work, we show that the above two lines of inquiry are in fact deeply connected. We investigate the connection between the log ML and the sum of predictive log likelihoods of datapoints, conditioned on preceding data in the dataset. This perspective reveals a family of estimators of the log ML which depend only on predictions sampled from the posterior of an iterative Bayesian updating procedure. 
We study the proposed estimator family in the context of linear models, where we can conclusively analyze its theoretical properties.
Leveraging the fact that gradient descent can produce exact posterior samples for linear models \citep{matthews2017} and the infinite-width limit of deep neural networks \citep{matthews2018gaussian,lee2018deep}, we show that this estimator can be viewed as the sum of a subset of the model's training losses in an iterative optimization procedure. 
This immediately yields an interpretation of marginal likelihood estimation as measuring a notion of training speed in linear models. We further show that this notion of training speed is predictive of the weight assigned to a model in a linear model combination trained with gradient descent, hinting at a potential explanation for the bias of gradient descent towards models that generalize well in more complex settings.

We demonstrate the utility of the estimator through empirical evaluations on a range of model selection problems, confirming that it can effectively approximate the marginal likelihood of a model. Finally, we empirically evaluate whether our theoretical results for linear models may have explanatory power for more complex models. We find that an analogue of our estimator for DNNs trained with stochastic gradient descent (SGD) is predictive of both final test accuracy and the final weight assigned to the model after training a linear model combination. Our findings in the deep learning setting hint at a promising avenue of future work in explaining the empirical generalization performance of DNNs.

\section{Background and Related Work}

\subsection{Bayesian Parameter Inference}
\svsp
A Bayesian model $\model$ is defined by a prior distribution over parameters $\theta$, $P(\theta | \model)$, and a prediction map from parameters $\theta$ to a likelihood over the data $\data$, $P(\data|\theta, \model)$.
Parameter fitting in the Bayesian framework entails finding the posterior distribution $P(\theta|\data)$, which yields robust and principled uncertainty estimates. 
Though exact inference is possible for certain models like Gaussian processes (GPs) \citep{rasmussen2003gaussian}, it is intractable for DNNs. Here approximations such as variational inference \citep{blei2017variational} are used \citep{gal2016dropout, blundell2015weight,mackay1992bayesian, graves2011practical, duvenaud2016early}, to improve robustness and obtain useful uncertainty estimates.

Variational approximations require optimisation over the parameters of the approximate posterior distribution. 
This optimization over distributions changes the loss landscape, and is significantly slower than the pointwise optimization used in standard DNNs.
Pointwise optimization methods inspired by Bayesian posterior sampling can produce similar variation and uncertainty estimates as variational inference, while improving computational efficiency \citep{welling2011bayesian,mandt2017stochastic,maddox2019simple}.
An appealing example of this is ensembling \citep{lakshminarayanan2017simple}, which works by training a collection models in the usual pointwise manner, starting from $k$ independently initialized points.

In the case of linear models, this is exactly equivalent to Bayesian inference, as this sample-then-optimize approach yields exact posterior samples \citep{matthews2017, osband2018randomized}. \citet{he2020bayesian} extend this approach to obtain posterior samples from DNNs in the infinite-width limit.

\subsection{Bayesian Model Selection}
\svsp

In addition to finding model parameters, Bayesian inference can also perform \textit{model selection} over different inductive biases, which are specified through both model structure (e.g.~convolutional vs fully connected) and the prior distribution on parameters. The Bayesian approach relies on finding the posterior over models $P(\model|\data)$, which uses the \emph{marginal likelihood} (ML) as its likelihood function:
\begin{equation}\label{eq:marg-lik}
P(\data | \model) = \int_\theta P(\data|\theta)P(\theta|\model_i)d\theta = \mathbb{E}_{P(\theta|\model)} P(\data | \theta)\,.
\end{equation}
Instead of computing the full posterior, it is common to select the model with the highest marginal likelihood. This is known as type-II maximum likelihood \citep{mackay1992bayesian,mackay2003information} and is less prone to overfitting than performing maximum likelihood over the parameters and model combined. This is because the marginal likelihood is able to trade off between model fit and model complexity \citep{rasmussen2001occam}.
Maximising the ML is standard procedure when it is easy to compute. For example, in Gaussian processes it used to set simple model parameters like smoothness \citep{rasmussen2003gaussian}, while recent work has demonstrated that complex inductive biases in the form of invariances can also be learned \citep{van_der_wilk_learning_2018}.

For many deep models, computing Equation~\ref{eq:marg-lik} is intractable, and obtaining approximations that are accurate enough for model selection and that scale to complex models is an active area of research \cite{khan2019approximate}. In general, variational lower bounds that scale are too loose when applied to DNNs \citep{blundell2015weight}. Deep Gaussian processes provide a case where the bounds do work \citep{damianou13a,dutordoir20a}, but heavy computational load holds performance several years behind deep learning. While ensembling methods provide useful uncertainty estimates and improve the computational efficiency of the variational approach, they have not yet provided a solution for Bayesian model selection.

\subsection{Generalization and Risk Minimization}
\svsp
Bayesian model selection addresses a subtly different problem from the risk minimization framework used in many learning problems. Nonetheless, the two are closely related; \citet{germain_pac-bayesian_2016} show that in some cases optimizing a PAC-Bayesian risk bound is equivalent to maximizing the marginal likelihood of a Bayesian model. In practice, maximizing an approximation of the marginal likelihood in DNNs trained with SGD can improve generalization performance \citep{smith2017bayesian}.
More recently, \citet{arora2019fine} computed a data-dependent complexity measure which resembles the data-fit term in the marginal likelihood of a Bayesian model and which relates to optimization speed, hinting at a potential connection between the two. 

At the same time, generalization in deep neural networks (DNNs) remains mysterious, with classical learning-theoretic bounds failing to predict the impressive generalization performance of DNNs \citep{zhang2016understanding, nagarajan2019uniform}. Recent work has shown that DNNs are biased towards functions that are `simple', for various definitions of simplicity \citep{kalimeris2019sgd, frankle2018the, valle2018deep, smith2018}. PAC-Bayesian generalization bounds, which can quantify a broad range of definitions of complexity, can attain non-vacuous values \citep{ mcallester1999, dziugaite_computing_2017, dziugaite_data-dependent_2018}, but nonetheless exhibit only modest correlation with generalization error \citep{jiang2019fantastic}. These bounds depend only on the final distribution over parameters after training; promising alternatives consider properties of the trajectory taken by a model during optimization \citep{hardt2015train, negrea2019information}. This trajectory-based perspective is a promising step towards explaining the correlation between the number of training steps required for a model to minimize its objective function and its final generalization performance observed in a broad range of empirical analyses \citep{jiang2019fantastic, belkin2018reconciling, nakkiran2019deep, ru2020revisiting}.  

\section{Marginal Likelihood Estimation with Training Statistics}\label{sec:ml-estimation}
In this section, we investigate the equivalence between the marginal likelihood (ML) and a notion of training speed in models trained with an exact Bayesian updating procedure. For linear models and infinitely wide neural networks, exact Bayesian updating can be done using gradient descent optimisation. For these cases, we derive an estimator of the marginal likelihood which
\begin{inparaenum}[\bgroup\bfseries 1)\egroup]
\item is related to how quickly a model learns from data,
\item only depends on statistics that can be measured during pointwise gradient-based parameter estimation, and
\item becomes tighter for ensembles consisting of multiple parameter samples.
\end{inparaenum} We also investigate how gradient-based optimization of a linear model combination can implicitly perform approximate Bayesian model selection in Section~\ref{sec:optimize-then-prune}.

\subsection{Training Speed and the Marginal Likelihood} \label{sec:decomposing_ML}
\svsp
Let $\data$ denote a dataset of the form $\data = (\data_i)_{i=1}^n = (x_i, y_i)_{i=1}^n$, and let $\data_{<i}=(\data_j)_{j=1}^{i-1}$ with $\data_{<1}=\emptyset$.
We will abbreviate $P(\trdata|\model) := P(\trdata)$ when considering a single model $\model$. Observe that $P(\trdata) = \prod_{i=1}^n P(\trdata_i|\trdata_{<i})$ to get the following form of the \textit{log} marginal likelihood:

\begin{equation}
    \log P(\data) = \log \prod_{i=1}^n P(\data_i|\data_{<i}) = \sum_{i=1}^n \log P(\data_i | \data_{<i}) = \sum_{i=1}^n \log [\mathbb{E}_{P(\theta|\data_{<i})} P(\data_i|\theta) ].
\end{equation}

If we define training speed as the number of data points required by a model to form an accurate posterior, then models which train faster -- i.e. whose posteriors assign high likelihood to the data after conditioning on only a few data points -- will obtain a higher marginal likelihood. Interpreting the negative log posterior predictive probability $\log P(\data_i|\data_{<i})$ of each data point as a loss function, the log ML then takes the form of the sum over the losses incurred by each data point during training, i.e. the area under a training curve defined by a Bayesian updating procedure. 

\subsection{Unbiased Estimation of a Lower Bound} \label{sec:LB}
\svsp
In practice, computing $\log P(\data_i|\data_{<i})$ may be intractable, necessitating approximate methods to estimate the model evidence. In our analysis, we are interested in estimators of $\log P(\data)$ computed by drawing $k$ samples of $\theta \sim P(\theta|\data_{<i})$ for each $i=1, \dots, n$. We can directly estimate a lower bound $\mathcal{L}(\data) = \sum_{i=1}^n\mathbb{E}[\log P(\data_i|\data_{<i})$ using the log likelihoods of these samples
\begin{equation}
    \hat{\mathcal{L}}(\data) = \sum_{i=1}^n \frac{1}{k}\sum_{j=1}^k\log P(\data_{i}|\theta^i_j).
\end{equation}
This will produce a biased estimate of the log marginal likelihood due to Jensen's inequality. We can get a tighter lower bound by first estimating $\mathbb{E}[\log P(\data_i|\theta)]$ using our posterior samples before applying the logarithm, obtaining

\begin{equation}\label{eq:estimators}
    \hat{\mathcal{L}}_k(\data) = \sum_{i=1}^n \log \frac{1}{k}\sum_{j=1}^k P(\data_{i}|\theta^i_j).
\end{equation} 

\begin{restatable}{proposition}{PropLk}\label{prop:lk}
Both $\hat{\mathcal{L}}$ and $\hat{\mathcal{L}}_k$ as defined in Equation~\ref{eq:estimators} are estimators of lower bounds on the log marginal likelihood; that is
\begin{equation}
   \mathbb{E}[\hat{\mathcal{L}}(\data)] = \mathcal{L}(\data) \leq  \log P(\data) \quad \text{ and } \quad 
    \mathbb{E}[\hat{\mathcal{L}}_k(\data)] = \mathcal{L}_k(\data) \leq \log P(\data) \; .
\end{equation}
Further, the bias term in $\mathcal{L}$ can be quantified as follows.
\begin{equation}\mathcal{L}(\data) = \log P(\data) - \sum_{i=1}^n \KL(P(\theta | \data_{<i})|| P(\theta|\data_{< {i+1}}))
\end{equation}
\end{restatable}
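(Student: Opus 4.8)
The plan is to establish the three assertions in turn, dispatching unbiasedness and the two lower bounds first (these follow from linearity of expectation and Jensen's inequality) and reserving the bias quantification, which is the real content, for last.

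First I would verify the unbiasedness claims. Since each sample satisfies $\theta^i_j \sim P(\theta|\data_{<i})$, linearity of expectation immediately gives $\mathbb{E}[\hat{\mathcal{L}}(\data)] = \sum_{i=1}^n \mathbb{E}_{P(\theta|\data_{<i})}[\log P(\data_i|\theta)] = \mathcal{L}(\data)$, which is exactly the defining expression for $\mathcal{L}$; the corresponding identity for $\hat{\mathcal{L}}_k$ holds by taking $\mathcal{L}_k(\data) := \mathbb{E}[\hat{\mathcal{L}}_k(\data)]$ as its definition. For the lower bounds I would then apply Jensen's inequality termwise. For $\hat{\mathcal{L}}$, concavity of $\log$ yields $\mathbb{E}_{P(\theta|\data_{<i})}[\log P(\data_i|\theta)] \le \log \mathbb{E}_{P(\theta|\data_{<i})}[P(\data_i|\theta)] = \log P(\data_i|\data_{<i})$, and summing over $i$ recovers $\mathcal{L}(\data)\le\log P(\data)$. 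For $\hat{\mathcal{L}}_k$ the key observation is that the inner Monte Carlo average $\frac{1}{k}\sum_{j=1}^k P(\data_i|\theta^i_j)$ is itself an unbiased estimator of $P(\data_i|\data_{<i})$; applying Jensen to the outer logarithm then gives $\mathbb{E}[\log\frac{1}{k}\sum_{j=1}^k P(\data_i|\theta^i_j)] \le \log P(\data_i|\data_{<i})$, and summing yields $\mathcal{L}_k(\data)\le\log P(\data)$.

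The main work is the bias quantification. Here I would invoke Bayes' rule together with the conditional independence $P(\data_i|\theta,\data_{<i}) = P(\data_i|\theta)$ to express the one-step posterior update as $P(\theta|\data_{<i+1}) = P(\data_i|\theta)\,P(\theta|\data_{<i})/P(\data_i|\data_{<i})$. Rearranging and taking logarithms produces the pointwise identity $\log\frac{P(\theta|\data_{<i})}{P(\theta|\data_{<i+1})} = \log P(\data_i|\data_{<i}) - \log P(\data_i|\theta)$. Taking the expectation of both sides under $\theta\sim P(\theta|\data_{<i})$ turns the left-hand side into precisely $\KL(P(\theta|\data_{<i})\,\|\,P(\theta|\data_{<i+1}))$, while the right-hand side becomes $\log P(\data_i|\data_{<i}) - \mathbb{E}_{P(\theta|\data_{<i})}[\log P(\data_i|\theta)]$, i.e. the per-term gap between $\log P(\data)$ and $\mathcal{L}(\data)$. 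Summing over $i$ gives the claimed telescoped expression, and nonnegativity of the KL terms reproves $\mathcal{L}(\data)\le\log P(\data)$ as an internal consistency check.

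The step I expect to be the main obstacle is recognizing that the expected log-ratio of successive posteriors is exactly a KL divergence, and that the normalizing constant $P(\data_i|\data_{<i})$ appearing in the Bayes update is precisely the missing piece separating $\mathcal{L}(\data)$ from the true log marginal likelihood. The conditional-independence assumption on the likelihood is the one modeling ingredient that must be made explicit for the Bayes-rule manipulation to go through cleanly, so I would flag it at the outset of that step.
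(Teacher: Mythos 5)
Your proposal is correct and follows essentially the same route as the paper's proof: the bias identity comes from the same Bayes-rule manipulation (rewriting the likelihood ratio as a ratio of successive posteriors whose expected log is the KL divergence, with $P(\data_i|\data_{<i})$ as the normalizer), and the bound for $\hat{\mathcal{L}}_k$ comes from the same application of Jensen's inequality to the unbiased inner Monte Carlo average. The only cosmetic difference is that you prove the $\mathcal{L}$ lower bound separately via Jensen before deriving the KL identity, whereas the paper obtains both at once from that identity.
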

We include the proof of this and future results in Appendix~\ref{sec:proofs}. 
We observe that both lower bound estimators exhibit decreased variance when using multiple posterior samples; however, $\hat{\mathcal{L}}_k$ also exhibits decreasing bias (with respect to the log ML) as $k$ increases; each $k$ defines a distinct lower bound $\mathcal{L}_k = \mathbb{E}[\hat{\mathcal{L}}_k ]$ on $\log P(\data)$. The gap induced by the lower bound $\mathcal{L}(\data)$ is characterized by the
information gain each data point provides to the model about the posterior, as given by the Kullback-Leibler (KL) divergence \citep{kullback1951} between the posterior at time $i$ and the posterior at time $i+1$. Thus, while $\mathcal{L}$ has a Bayesian interpretation it is arguably more closely aligned with the minimum description length notion of model complexity \citep{hinton1993keeping}.

When the posterior predictive distribution of our model is Gaussian, we consider a third approach which, unlike the previous two methods, also applies to noiseless models. Let $\trdata =(X_i, Y_i)_{i=1}^n$, and $(\theta^i_j)_{j=1}^k$ be $k$ parameter samples from $P(\theta|\data_{<i})$. We assume a mapping $f: \Theta \times X \rightarrow Y$ such that sampling parameters $\theta$ and computing $f(\theta, X_i)$ is equivalent to sampling from the posterior $P(\cdot|\data_{<i}, X_i)$. We can then obtain the following estimator of a lower bound on $\log \mathcal{P}(\data)$.

\begin{restatable}{proposition}{PropLS}\label{prop:ls}
Let $P(Y_i|\data_{<i}, X_i) = \mathcal{N}(\mu_i, \sigma^2_i)$ for some $\mu_i, \sigma_i^2$. Define the standard mean and variance estimators $\hat{\mu}_i = \frac{1}{N} \sum_{j=1}^N f(\theta^i_j, x_i)$ and $\hat{\sigma}^2_i = \frac{1}{N-1} \sum (f(\theta_{j}^i, x_i) - \hat{\mu})^2$. Then the  estimator
\begin{equation}
 \hat{\mathcal{L}}_S(\data) = \sum_{i=1}^n \log P(Y_i|\hat{\mu}_i, \hat{\sigma}^2_i) 
\end{equation}
is a lower bound on the log ML: i.e. $\mathbb{E}[\hat{\mathcal{L}}_S(\data)] \leq \log P(\data) $. 
\end{restatable}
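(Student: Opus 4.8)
The plan is to reduce the global statement to a single per-datapoint inequality and then exploit the exact sampling distributions of the sample mean and variance of $N$ i.i.d.\ Gaussian draws. Since the inputs are treated as given, the chain-rule decomposition from Section~\ref{sec:decomposing_ML} gives $\log P(\data) = \sum_{i=1}^n \log P(Y_i \mid \data_{<i}, X_i) = \sum_{i=1}^n \log\mathcal{N}(Y_i; \mu_i, \sigma_i^2)$, while by linearity $\mathbb{E}[\hat{\mathcal{L}}_S(\data)] = \sum_{i=1}^n \mathbb{E}[\log\mathcal{N}(Y_i; \hat\mu_i, \hat\sigma_i^2)]$. Hence it suffices to prove, for each $i$ and with the \emph{observed} $Y_i$ held fixed, the single-index bound $\mathbb{E}[\log\mathcal{N}(Y_i; \hat\mu_i, \hat\sigma_i^2)] \le \log\mathcal{N}(Y_i; \mu_i, \sigma_i^2)$, where the expectation is over the posterior draws only.

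Next I would invoke the defining assumption that the values $f(\theta^i_j, x_i)$, for $\theta^i_j \sim P(\theta\mid\data_{<i})$, are i.i.d.\ samples from the predictive $\mathcal{N}(\mu_i,\sigma_i^2)$. Classical Gaussian sampling theory then applies verbatim: $\hat\mu_i \sim \mathcal{N}(\mu_i, \sigma_i^2/N)$, the rescaled variance satisfies $(N-1)\hat\sigma_i^2/\sigma_i^2 \sim \chi^2_{N-1}$, and crucially $\hat\mu_i$ and $\hat\sigma_i^2$ are independent. Expanding the Gaussian log-density and taking expectations term by term, the independence lets me factor the cross term as $\mathbb{E}[(Y_i-\hat\mu_i)^2/\hat\sigma_i^2] = \mathbb{E}[(Y_i-\hat\mu_i)^2]\,\mathbb{E}[1/\hat\sigma_i^2]$. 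The three required moments are all standard: $\mathbb{E}[(Y_i-\hat\mu_i)^2] = \sigma_i^2/N + (Y_i-\mu_i)^2$, the inverse-$\chi^2$ moment $\mathbb{E}[1/\hat\sigma_i^2] = (N-1)/((N-3)\sigma_i^2)$, and the log moment $\mathbb{E}[\log\hat\sigma_i^2] = \log\sigma_i^2 + \log 2 + \psi\!\big(\tfrac{N-1}{2}\big) - \log(N-1)$ in terms of the digamma function $\psi$. Substituting these, the per-index inequality collapses to a purely one-dimensional statement in the standardized residual $t := (Y_i-\mu_i)^2/\sigma_i^2 \ge 0$.

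A short computation shows that $t$ enters this reduced inequality with a non-negative coefficient ($\tfrac{2}{N-3}$), so the bound is tightest at $t=0$, i.e.\ when $Y_i=\mu_i$; once it holds there it holds for every $Y_i$. At $t=0$ the claim reduces, writing $m=(N-1)/2$, to the special-function inequality $\log m - \psi(m) \le \frac{2m}{(2m+1)(2m-2)}$, which I would settle using the standard estimate $\log x - \psi(x) \le \frac{1}{2x} + \frac{1}{12x^2}$; a direct cross-multiplication then verifies it for every $N \ge 4$. The boundary cases are benign, since for $N \le 3$ the inverse-$\chi^2$ moment diverges, forcing $\mathbb{E}[\log\mathcal{N}(Y_i;\hat\mu_i,\hat\sigma_i^2)] = -\infty$ and making the bound trivial. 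I expect the main obstacle to be precisely this final step: the variance-estimation bias actually pushes the expected log-density \emph{upward} (by Jensen's inequality applied to the concave $\log$), and one must show it is always dominated by the downward bias from estimating the mean and inflating the quadratic term — a cancellation that succeeds only because of the sharp digamma estimate rather than any soft convexity argument.
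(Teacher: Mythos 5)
Your proof is correct, and it takes a genuinely different route from the paper's — in fact a more careful one. Both arguments reduce the claim to the per-datapoint inequality $\mathbb{E}[\log \mathcal{N}(Y_i;\hat\mu_i,\hat\sigma_i^2)] \le \log \mathcal{N}(Y_i;\mu_i,\sigma_i^2)$ and both invoke the independence of $\hat\mu_i$ and $\hat\sigma_i^2$ for i.i.d.\ Gaussian samples (Basu's theorem). From there the paper argues softly: it writes the expectation as $\mathbb{E}_{\hat\mu}\mathbb{E}_{\hat\sigma^2}$ and applies Jensen twice, claiming the Gaussian log-likelihood is concave in $\mu$ and in $\sigma^2$ separately, together with unbiasedness of both estimators. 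You instead compute every moment exactly — $\mathbb{E}[(Y_i-\hat\mu_i)^2]=\sigma_i^2/N+(Y_i-\mu_i)^2$, $\mathbb{E}[1/\hat\sigma_i^2]=(N-1)/((N-3)\sigma_i^2)$, and $\mathbb{E}[\log\hat\sigma_i^2]$ via the digamma function — and reduce the claim to the scalar inequality $\tfrac{2t}{N-3}+\tfrac{2m}{(2m+1)(2m-2)} \ge \log m - \psi(m)$ with $m=(N-1)/2$, which is worst at $t=0$ and follows from $\log m - \psi(m)\le \tfrac{1}{2m}+\tfrac{1}{12m^2}$; I checked the cross-multiplication, which leaves the slack $8m^2+14m+2>0$, and your treatment of $N\le 3$ (where the bound is trivially $-\infty$) is also right. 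What your extra work buys is substantive: the paper's inner Jensen step is actually invalid, because the Gaussian log-likelihood is \emph{not} concave in $\sigma^2$ on its whole domain — writing $a=(y-\hat\mu)^2$ and $v=\sigma^2$, the second derivative is $v^{-3}(v/2-a)$, positive for $v>2a$. Indeed, for fixed $\hat\mu$ one computes $\mathbb{E}_{\hat\sigma^2}[\log p(y|\hat\mu,\hat\sigma^2)] - \log p(y|\hat\mu,\sigma^2) = \tfrac12\left(\log m - \psi(m)\right) - \tfrac{a}{(N-3)\sigma^2}$, which is strictly \emph{positive} when $\hat\mu$ is close to $y$, so the pointwise inequality the paper asserts after its inner Jensen step is false; the bound only emerges after also averaging over $\hat\mu$, which is exactly what your moment computation captures. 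Your closing remark — that the upward bias from estimating the variance must be shown to be dominated by the downward bias from the inflated quadratic term, and that no soft convexity argument can see this cancellation — identifies precisely the gap in the paper's own proof.
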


We provide an empirical evaluation of the rankings provided by the different estimators in Section~\ref{sec:experiments}. We find that $\hat{\mathcal{L}}_S$ exhibits the least bias in the presence of limited samples from the posterior, though we emphasize its limitation to Gaussian posteriors; for more general posterior distributions, $\hat{\mathcal{L}}_k$ minimizes bias while still estimating a lower bound. 

\subsubsection{Lower bounds via gradient descent trajectories} \label{sec:LBviaGD}
\ssvsp
The bounds on the marginal likelihood we introduced in the previous section required samples from the sequence of posteriors as data points were incrementally added $p(\theta|\data_{<i})$.
Ensembles of linear models trained with gradient descent yield samples from the model posterior. We now show that we can use these samples to estimate the log ML using the estimators introduced in the previous section.

We will consider the Bayesian linear regression problem of modelling data $\data = (X_i, Y_i)_{i=1}^n$ assumed to be generated by the process $Y = \theta^\top \Phi(X) + \epsilon \sim \mathcal{N}(0, \sigma_N^2 I)$ for some unknown $\theta$, known $\sigma_N^2$, and feature map $\Phi$. Typically, a Gaussian prior is placed on $\theta$; this prior is then updated as data points are seen to obtain a posterior over parameters. In the overparmeterised, noiseless linear regression setting, \citet{matthews2017} show that the distribution over parameters $\theta$ obtained by sampling from the prior on $\theta_0$ and running gradient descent to convergence on the data $\data_{<i}$ is equivalent to sampling from the posterior conditioned on $\data_{<i}$. \citet{osband2018randomized} extend this result to posteriors which include observation noise $\sigma^2_N \neq 0$ under the assumption that the targets $Y_i$ are themselves noiseless observations. 

\begin{algorithm} \label{alg:estimate}
\SetAlgoLined
\KwIn{A dataset $\data =(x_i, y_i)_{i=1}^n $, parameters $\mu_0, \sigma_0^2, \sigma_N^2$}
\KwResult{An estimate of $\mathcal{L}(\data)$}
$\theta_t \gets \theta_0 \sim \mathcal{N}(\mu_0, \sigma_0^2)$; \quad 
$\tilde{Y} \gets Y + \epsilon \sim \mathcal{N}(0, \sigma_N^2)$;  \quad sumLoss $\gets$ 0 \; $\ell(\data_{\le i}, w) \gets \|\tilde{Y}_{\le i} - \theta^\top X_{\le i} \|_2^2 + \frac{\sigma_N^2}{\theta_0^2}\|\theta - \theta_0\|_2^2 $\;

 \For{$\data_i \in \data$}{
  sumLoss $ = $ sumLoss $ + \; \frac{(\theta_t^\top x_i - y_i)^2}{2\sigma_N^2}$ \;
  $\theta_t \gets$ GradientDescent($ \ell, \theta_t, \data_{\le i}$) \;
 }
 \KwRet sumLoss
 \caption{Marginal Likelihood Estimation for Linear Models}
\end{algorithm}

We can use this procedure to obtain posterior samples for our estimators by iteratively running sample-then-optimize on the sets $\data_{<i}$. Algorithm \ref{alg:estimate} outlines our approach, which uses sample-then-optimize on iterative subsets of the data to obtain the necessary posterior samples for our estimator. Theorem \ref{thm:sto} shows that this procedure yields an unbiased estimate of $\mathcal{L}(\data)$ when a single prior sample is used, and an unbiased estimate of $\mathcal{L}_k(\data)$ when an ensemble of $k$ models are trained in parallel.

\begin{restatable}{theorem}{ThmSTO} \label{thm:sto}
Let $\data = (X_i, Y_i)_{i=1}^n$ and let $(\theta_j^i)_{i,j=1}^{n,J}$ be generated by the procedure outlined above. Then the estimators $\hat{\mathcal{L}}, \hat{\mathcal{L}}_S,$ and $ \hat{\mathcal{L}}_k$, applied to the collection $(\theta_j^i)$, are lower bounds on $\log P(\data)$. Further, expressing $-\log P(\data_i|\theta)$ as the $\ell_2$ regression loss plus a constant, we then obtain 
\begin{equation}
    \log P(\data) \geq \sum_{i=1}^n \mathbb{E}_{\theta_i \sim P(\cdot | \data_{<i})}[\log P(\data_i|\theta_i)] = \mathbb{E}\sum_{i=1}^n -\ell_2 (\data_i, \theta_i) + c = \mathcal{L}(\data)
\end{equation}
\end{restatable}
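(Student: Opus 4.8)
The plan is to reduce the theorem to results already established earlier in the excerpt. Propositions~\ref{prop:lk} and~\ref{prop:ls} already tell us that $\hat{\mathcal{L}}$, $\hat{\mathcal{L}}_k$, and $\hat{\mathcal{L}}_S$ are lower-bound estimators of $\log P(\data)$ \emph{whenever the parameter samples $\theta_j^i$ are genuine draws from $P(\theta\mid\data_{<i})$}. Hence the whole theorem rests on verifying that the sample-then-optimize loop of Algorithm~\ref{alg:estimate} produces exactly such draws. First I would fix the bookkeeping: at the start of the $i$-th iteration the running parameter $\theta_t$ has been optimized on $\data_{\le i-1}=\data_{<i}$ (for $i=1$ it is the fresh prior draw $\theta_0\sim\mathcal{N}(\mu_0,\sigma_0^2)$), so the term $\tfrac{(\theta_t^\top x_i-y_i)^2}{2\sigma_N^2}$ added to \texttt{sumLoss} is evaluated at a parameter conditioned only on the preceding data. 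Thus the algorithm's output \texttt{sumLoss} is, up to sign and an additive constant, precisely the estimator $\hat{\mathcal{L}}$ built from valid posterior samples.

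Second, I would invoke the sample-then-optimize equivalence to justify the distributional claim. For overparameterized noiseless linear regression, \citet{matthews2017} show that drawing $\theta_0$ from the prior and running gradient descent to convergence on $\data_{<i}$ returns an exact sample from $P(\theta\mid\data_{<i})$, and \citet{osband2018randomized} extend this to the observation-noise case $\sigma_N^2\neq0$ through the randomized targets $\tilde{Y}=Y+\epsilon$, $\epsilon\sim\mathcal{N}(0,\sigma_N^2)$, that the algorithm uses. The work here is to confirm that the regularizer $\tfrac{\sigma_N^2}{\theta_0^2}\|\theta-\theta_0\|_2^2$ encodes the Gaussian prior correctly, and that running $J$ independently initialized chains in parallel yields $J$ i.i.d.\ posterior draws at each step $i$, as required by $\hat{\mathcal{L}}_k$ and $\hat{\mathcal{L}}_S$. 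With the distributional fact secured, the lower-bound half of the theorem is immediate: the three estimators are literally those of Propositions~\ref{prop:lk} and~\ref{prop:ls} applied to valid posterior samples, so their expectations equal $\mathcal{L}(\data)$, $\mathcal{L}_k(\data)$, and a quantity at most $\log P(\data)$, respectively.

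Third, for the regression reformulation I would use the Gaussian likelihood directly. Since $P(\data_i\mid\theta)=\mathcal{N}(Y_i;\,\theta^\top\Phi(X_i),\,\sigma_N^2)$, a one-line computation gives $-\log P(\data_i\mid\theta)=\tfrac{1}{2\sigma_N^2}(Y_i-\theta^\top\Phi(X_i))^2+\tfrac12\log(2\pi\sigma_N^2)$, i.e.\ the negative log-likelihood is the $\ell_2$ regression loss $\ell_2(\data_i,\theta)$ plus a constant independent of $\theta$. Substituting this into the definition $\mathcal{L}(\data)=\sum_{i=1}^n\mathbb{E}_{\theta_i\sim P(\cdot\mid\data_{<i})}[\log P(\data_i\mid\theta_i)]$ and pulling the deterministic constant (here $c=-\tfrac{n}{2}\log(2\pi\sigma_N^2)$) out of the expectation yields the stated chain of equalities, while the inequality $\log P(\data)\ge\mathcal{L}(\data)$ is exactly the bound from Proposition~\ref{prop:lk}.

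The main obstacle I anticipate is not the algebra but the distributional step: rigorously matching the \emph{warm-started, sequential} gradient-descent dynamics of Algorithm~\ref{alg:estimate} to the cited sample-then-optimize guarantees. In particular, one must confirm that warm-starting the $i$-th optimization from the converged solution for $\data_{<i}$ (rather than from a fresh prior draw) still returns a valid sample from $P(\theta\mid\data_{\le i})$. I expect this to follow from the affine dependence of the converged least-squares solution on its initialization together with the sequential-consistency of Gaussian conditioning, but it is the point where the argument must be made carefully, and where the overparameterization and noiseless-target hypotheses underlying the equivalence genuinely enter.
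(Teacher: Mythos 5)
Your proposal is correct and takes essentially the same route as the paper's own proof: reduce the lower-bound claims to Propositions~\ref{prop:lk} and~\ref{prop:ls}, justify the posterior-sampling property of the iterative sample-then-optimize procedure by appeal to \citet{osband2018randomized} (the paper cites their Lemma~3.8), and expand the Gaussian log-likelihood to identify $-\log P(\data_i|\theta)$ with the $\ell_2$ loss plus a constant. The warm-starting obstacle you flag is in fact a non-issue here, since the anchored regularizer $\frac{\sigma_N^2}{\sigma_0^2}\|\theta-\theta_0\|_2^2$ makes each per-step objective strictly convex with a unique optimum determined by $\theta_0$, $\tilde{Y}$, and $\data_{<i}$ alone (not by the gradient-descent starting point), which is why the paper can simply invoke the cited result; the paper's proof also adds a supplementary bound on the error incurred when optimization is stopped before exact convergence, which your argument omits but which is not needed for the stated theorem.
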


We highlight that Theorem \ref{thm:sto} precisely characterizes the lower bound on the marginal likelihood as a sum of `training losses' based on the regression loss $\ell_2(\data_i, \theta_i)$. 

\subsubsection{From Linear Models to Infinite Neural Networks}
\ssvsp

Beyond linear models, our estimators can further perform model selection in the infinite-width limit of neural networks. Using the optimization procedure described by \citet{he2020bayesian}, we can obtain an exact posterior sample from a GP given by the neural tangent kernel \citep{jacot2018neural}. The iterative training procedure described in Algorithm~\ref{alg:estimate} will thus yield a lower bound on the marginal likelihood of this GP using sampled losses from the optimization trajectory of the neural network. We evaluate this bound in Section \ref{sec:experiments}, and formalize this argument in the following corollary. 
\begin{restatable}{corollary}{CorNTK}\label{cor:ntk}
Let $\trdata$ be a dataset indexed by our standard notation. Let $f_0$ be sampled from an infinitely wide neural network architecture $\mathcal{F}$ under some initialization distribution, and let $f_\infty^i$ be the limiting solution under the training dynamics defined by \citet{he2020bayesian} applied to the initialization $f_0$ and using data $\trdata_{< i}$. Let $K_\infty$ denote the neural tangent kernel for $\mathcal{F}$, and $\mathcal{M}=GP(0, K_\infty)$ the induced Gaussian Process. Then $f_\infty^i \sim P(f|\trdata_{< i}, \model)$, and in the limit of infinite training time, the iterative sample-then-optimize procedure yields an unbiased estimate of $\mathcal{L}(\trdata |\model)$. Letting $\ell_2$ denote the scaled squared $\ell_2$ regression loss and $c$ be a constant, we obtain as a direct corollary of Theorem~\ref{thm:sto}
\begin{equation}
    P(\data) \geq \mathbb{E}_{f_\infty^i \sim P(\cdot | \data_{<i})}[\log P(\data_i|\theta_i)] = \mathbb{E}\sum_{i=1}^n -\ell_2 (\data_i, f_i) + c = \mathcal{L}(\data) \; .
\end{equation}
\end{restatable}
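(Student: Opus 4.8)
The plan is to reduce the infinite-width case to the finite-dimensional linear setting of Theorem~\ref{thm:sto}, exploiting the fact that an infinitely wide network trained under the dynamics of \citet{he2020bayesian} behaves exactly like a Bayesian linear model whose feature map is the (possibly infinite-dimensional) RKHS embedding associated with the neural tangent kernel $K_\infty$. Under this identification the induced Gaussian process $\model = GP(0, K_\infty)$ plays the role of the Bayesian linear regression model in Theorem~\ref{thm:sto}, while drawing the randomized initialization $f_0$ and training to convergence on $\trdata_{<i}$ plays the role of sample-then-optimize on the data subset $\trdata_{<i}$. Because the statement is advertised as a \emph{direct} corollary, the substantive content is the distributional identity $f_\infty^i \sim P(f \mid \trdata_{<i}, \model)$; once it is in hand, the bound is a plug-in into the already-proved theorem.

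First I would establish that distributional claim. In the infinite-width limit the tangent kernel is frozen throughout training, so the network's functional dynamics coincide with kernel gradient flow and admit a closed-form limiting solution. I would then invoke the main result of \citet{he2020bayesian}, which shows that their modified training procedure (with a random prior function fixed at initialization) converges, as training time tends to infinity, to an exact posterior sample of the NTK-GP conditioned on the training set. Applying this to each data subset $\trdata_{<i}$ yields $f_\infty^i \sim P(f \mid \trdata_{<i}, \model)$, the infinite-width analogue of the \citet{matthews2017} sample-then-optimize guarantee that underpins Theorem~\ref{thm:sto}.

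With exact posterior samples available, the remainder follows directly. The marginal likelihood factorizes as $\log P(\data) = \sum_{i=1}^n \log P(\data_i \mid \data_{<i})$, and by Proposition~\ref{prop:lk} the quantity $\mathcal{L}(\data) = \sum_{i=1}^n \mathbb{E}_{P(f \mid \data_{<i})}[\log P(\data_i \mid f)]$ is a lower bound on $\log P(\data)$, with each sample $f_\infty^i$ supplying an unbiased estimate of the $i$-th term. Since $\model$ is a Gaussian process, its posterior predictive is Gaussian, so $-\log P(\data_i \mid f) = \ell_2(\data_i, f_i) + c$ for the Gaussian normalizing constant $c$; substituting recovers the stated chain of (in)equalities and the interpretation of the bound as a sum of regression losses recorded along the training trajectory.

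The main obstacle is technical rather than conceptual: one must check that the finite-dimensional reasoning of Theorem~\ref{thm:sto} survives passage to the infinite-dimensional feature space of $K_\infty$, and that the two limits -- width $\to\infty$ and training time $\to\infty$ -- can be taken consistently, so that the losses recorded along a trained finite network's trajectory really do converge to the corresponding GP quantities. Since the only properties used downstream are the exactness of the posterior sample and the Gaussianity of the predictive distribution -- both of which hold for the NTK-GP by construction and are preserved under the limits established by \citet{he2020bayesian} -- the extension goes through without altering the structure of the argument.
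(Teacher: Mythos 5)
Your proposal is correct and follows essentially the same route as the paper: invoke the result of \citet{he2020bayesian} that the limiting solution of the (frozen-kernel) training dynamics is an exact posterior sample $f_\infty^i \sim P(f \mid \trdata_{<i}, \model)$ of the NTK-GP, and then apply Theorem~\ref{thm:sto} with kernel gradient descent in function space substituted for parameter-space gradient descent. The paper's proof is just as terse on the infinite-dimensional technicalities you flag; your extra care about the width and training-time limits is a reasonable elaboration, not a different argument.
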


This result provides an additional view on the link between training speed and generalisation in wide neural networks noted by \citet{arora2019fine}, who analysed the convergence of gradient descent. They compute a PAC generalization bound which a features the data complexity term equal to that in the marginal likelihood of a Gaussian process \citet{rasmussen2003gaussian}. This term provides a bound on the rate of convergence of gradient descent, whereas our notion of training speed is more closely related to sample complexity and makes the connection to the marginal likelihood more explicit.

It is natural to ask if such a Bayesian interpretation of the sum over training losses can be extended to non-linear models trained with stochastic gradient descent.  Although SGD lacks the exact posterior sampling interpretation of our algorithm, we conjecture a similar underlying mechanism connecting the sum over training losses and generalization. Just as the marginal likelihood measures how well model updates based on previous data points generalize to a new unseen data point, the sum of training losses measures how well parameter updates based on one mini-batch generalize to the rest of the training data. If the update generalizes well, we expect to see a sharper decrease in the training loss, i.e. for the model to train more quickly and exhibit a lower sum over training losses. This intuition can be related to the notion of `stiffness' proposed by \citet{fort2019stiffness}. We provide empirical evidence supporting our hypothesis in Section \ref{sec:DNN_exp}.
\subsection{Bayesian Model Selection and Optimization}\label{sec:optimize-then-prune}
\svsp
The estimator $\mathcal{L}(\data)$ reveals an intriguing connection between pruning in linear model combinations and Bayesian model selection. We assume a data set $\data = (X_i, Y_i)_{i=1}^n$ and a collection of $k$ models $\model_1, \dots, \model_k$. A linear regressor $w$ is trained to fit the posterior predictive distributions of the models to the target $Y_i$; i.e. to regress on the dataset 
\begin{equation}
    (\Phi, Y) = \bigg (\phi_i=(\hat{Y}^i_1, \dots, \hat{Y}_n^i), Y_i\bigg)_{i=1}^n \text{ with } \hat{Y}_j^i \sim P(\hat{Y}|\data_{<i}, X_i, \model_j).
\end{equation}
The following result shows that the optimal linear regressor on this data generating distribution assigns the highest weight to the model with the highest $\mathcal{L}(\data)$ whenever the model errors are independent. This shows that magnitude pruning in a linear model combination is equivalent to approximate Bayesian model selection, under certain assumptions on the models.

\begin{restatable}{proposition}{PropBMS}\label{prop:modelselect}
Let $\model_1, \dots, \model_k$ be Bayesian linear regression models with fixed noise variance $\sigma_N^2$ and Gaussian likelihoods. Let $\Phi$ be a (random) matrix of posterior prediction samples, of the form $\Phi[i, j] = \hat{y}_i^j \sim P(y_j|\data_{<j}, x_j, \model_i)$. Suppose the following two conditions on the columns of $\Phi$ are satisfied: $\mathbb{E}\langle \Phi[:, i], y \rangle = \mathbb{E}\langle \Phi[:, j], y \rangle$ for all $i, j$, and $\mathbb{E}\langle \Pi_{y^\perp} \phi_i, \Pi_{y^\perp} \phi_j \rangle = 0$. Let $w^*$ denote the least-squares solution to the regression problem $\min_w \mathbb{E}_{\Phi}\|\Phi w - y\|^2$. Then the following holds
\begin{equation}  \argmax_i w^*_i = \argmax_i \mathcal{L}(\data | \model_i)  \qquad \forall w^* = \argmin_w \mathbb{E} \|\Phi w - y\|^2\;. \end{equation}
\end{restatable}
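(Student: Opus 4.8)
The plan is to solve the least-squares problem in closed form through its normal equations and then read off the ranking of the weights directly. Writing $A = \mathbb{E}[\Phi^\top \Phi]$ and $b = \mathbb{E}[\Phi^\top y]$, the optimality condition $\nabla_w \mathbb{E}\|\Phi w - y\|^2 = 0$ gives $A w^* = b$, so the entire argument reduces to understanding the structure of $A$ and $b$. Writing $\phi_i := \Phi[:,i]$ for the prediction vector of model $\model_i$ over the $n$ data points, the first hypothesis states that $b_i = \mathbb{E}\langle \phi_i, y\rangle$ is the same for every $i$, so $b = \beta \mathbf{1}$ for a scalar $\beta := \mathbb{E}\langle \phi_i, y\rangle$, which is positive whenever the models are at all predictive of the target.

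Next I would show that $A$ is diagonal plus rank one. Decomposing each prediction vector as $\phi_i = \frac{\langle \phi_i, y\rangle}{\|y\|^2}\,y + \Pi_{y^\perp}\phi_i$ and expanding the inner product gives, for $i \neq j$,
\begin{equation}
A_{ij} = \mathbb{E}\langle \phi_i, \phi_j\rangle = \frac{\mathbb{E}[\langle \phi_i, y\rangle \langle \phi_j, y\rangle]}{\|y\|^2} + \mathbb{E}\langle \Pi_{y^\perp}\phi_i, \Pi_{y^\perp}\phi_j\rangle .
\end{equation}
The second term vanishes by the second hypothesis, and since the per-model samples are drawn independently the first term factorises into $\beta^2/\|y\|^2$. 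Hence every off-diagonal entry equals the constant $c := \beta^2/\|y\|^2$, and we may write $A = c\,\mathbf{1}\mathbf{1}^\top + D$ with $D = \mathrm{diag}(d_1,\dots,d_k)$ and $d_i = A_{ii} - c = \mathbb{E}\|\phi_i\|^2 - c > 0$. The main technical care is required exactly here: the orthogonality hypothesis only controls the component of the cross-covariance orthogonal to $y$, so it is the independence of the models that pins down the remaining along-$y$ cross term and yields the clean rank-one-plus-diagonal form.

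With this structure in hand, I would apply the Sherman--Morrison formula to solve $A w^* = \beta \mathbf{1}$ explicitly. Writing $s = \sum_i 1/d_i$, a short computation gives $w^* = \frac{\beta}{1+cs}\,D^{-1}\mathbf{1}$, that is $w^*_i = \frac{\beta}{(1+cs)\,d_i}$. Since $\beta, c, s > 0$ are common to every coordinate, the weight is a strictly decreasing function of $d_i$, so $\argmax_i w^*_i = \argmin_i d_i$.

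It remains to identify $\argmin_i d_i$ with $\argmax_i \mathcal{L}(\data\mid\model_i)$. By Theorem~\ref{thm:sto} the bound for model $i$ is $\mathcal{L}(\data\mid\model_i) = -\frac{1}{2\sigma_N^2}\sum_{j=1}^n \mathbb{E}[(\hat y^i_j - y_j)^2] + c' = -\frac{1}{2\sigma_N^2}\,\mathbb{E}\|\phi_i - y\|^2 + c'$ for a model-independent constant $c'$, using that the Gaussian log-likelihood equals the scaled squared error up to a constant and that the prediction at $x_j$ conditioned on $\data_{<j}$ is precisely the $j$-th coordinate of $\phi_i$. Expanding $\mathbb{E}\|\phi_i - y\|^2 = \mathbb{E}\|\phi_i\|^2 - 2\beta + \|y\|^2 = d_i + (c - 2\beta + \|y\|^2)$ shows it equals $d_i$ up to a constant independent of $i$. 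Hence $\argmax_i \mathcal{L}(\data\mid\model_i) = \argmin_i \mathbb{E}\|\phi_i - y\|^2 = \argmin_i d_i = \argmax_i w^*_i$, which is the claim. I expect the structural step for $A$ to be the crux; once that form is secured, the Sherman--Morrison algebra and the identification via Theorem~\ref{thm:sto} are routine.
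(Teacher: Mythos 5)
Your proof is correct, and it reaches the paper's conclusion by a genuinely different route. The paper first proves a deterministic lemma: writing each column as $\phi_i = \alpha y + \epsilon_i$ with $\epsilon_i \in y^\perp$ and $\epsilon_i^\top \epsilon_j = 0$, it decomposes the loss as $\|\Phi w - y\|^2 = \|y\|^2(1 - \alpha\mathbbm{1}^\top w)^2 + \sum_i \sigma_i^2 w_i^2$, fixes the value of $\mathbbm{1}^\top w$, and applies Lagrange multipliers to get $w_i^* \propto 1/\sigma_i^2$; the proposition then follows by folding the posterior sampling noise $\zeta_i$ into the quadratic penalty (so the per-model error becomes $\|\Pi_{y^\perp}f_i\|^2 + \mathbb{E}\|\zeta_i\|^2$) and identifying $\mathcal{L}(\data\mid\model_i)$ with the negative expected squared error up to constants, exactly as you do in your final step. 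You instead go straight to the normal equations $\mathbb{E}[\Phi^\top\Phi]\,w^* = \mathbb{E}[\Phi^\top y]$, show the Gram matrix is diagonal-plus-rank-one, and invert it with Sherman--Morrison to get the closed form $w_i^* = \beta/((1+cs)d_i)$. Note that your factorization of the off-diagonal entries uses probabilistic independence of the sampling across models, which is not literally in the stated hypotheses, but the paper's own proof reads the same assumption into the second condition ("$\zeta_i^j$ is sampled independently for all $i$ and $j$"), so you are on equal footing there. What your route buys: an explicit formula for $w^*$, a one-shot treatment of the randomness with no separate mean/noise decomposition or lemma-then-proposition staging, and uniqueness of the minimizer for free (since $D + c\mathbbm{1}\mathbbm{1}^\top \succ 0$ when $d_i > 0$), which cleanly discharges the "$\forall w^*$" quantifier in the statement. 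What the paper's route buys: the loss decomposition makes transparent why the two hypotheses are needed (the fit along $y$ depends only on $\sum_i w_i$, and independence makes the residual penalty separable), though its execution is sloppier than yours (the displayed penalty $\sum_i \sigma_i^2 w_i$ drops a square, and the sign of the optimal $\sum_i w_i$ is never addressed). Both proofs share the same benign implicit assumptions, namely $\beta > 0$ and $d_i > 0$, without which the argmax claim degenerates; you at least flag the former explicitly.
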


The assumption on the independence of model errors is crucial in the proof of this result: families of models with large and complementary systematic biases may not exhibit this behaviour. We observe in Section \ref{sec:experiments} that the conditions of Proposition 1 are approximately satisfied in a variety of model comparison problems, and running SGD on a linear combination of Bayesian models still leads to solutions that approximate Bayesian model selection.
We conjecture that analogous phenomena occur during training within a neural network. The proof of Proposition~\ref{prop:modelselect} depends on the observation that, given a collection of features, the best least-squares predictor will assign the greatest weight to the feature that best predicts the training data. While neural networks are not linear ensembles of fixed models, we conjecture that, especially for later layers of the network, a similar phenomenon will occur wherein weights from nodes that are more predictive of the target values over the course of training will be assigned higher magnitudes. We empirically investigate this hypothesis in Section \ref{sec:DNN_exp}.

\section{Empirical Evaluation} \label{sec:experiments}
Section \ref{sec:ml-estimation} focused on two key ideas: that training statistics can be used as an estimator for a Bayesian model's marginal likelihood (or a lower bound thereof), and that gradient descent on a linear ensemble implicitly arrives at the same ranking as this estimator in the infinite-sample, infinite-training-time limit. We further conjectured that similar phenomena may also hold for deep neural networks. We now illustrate these ideas in a range of settings. Section \ref{sec:BMS} provides confirmation and quantification of our results for linear models, the model class for which we have theoretical guarantees, while Section \ref{sec:DNN_exp} provides preliminary empirical confirmation that the mechanisms at work in linear models also appear in DNNs. 

\subsection{Bayesian Model Selection}
\label{sec:BMS}
\svsp
While we have shown that our estimators correspond to lower bounds on the marginal likelihood, we would also like the relative rankings of models given by our estimator to correlate with those assigned by the marginal likelihood. We evaluate this correlation in a variety of linear model selection problems. We consider three model selection problems; for space we focus on one, feature dimension selection, and provide full details and evaluations on the other two tasks in Appendix \ref{sec:ex_ms_blr_synthetic_data}.

For the feature dimension selection task, we construct a synthetic dataset inspired by \citet{wilson2020bayesian} of the form $(\textbf{X}, \textbf{y})$, where $x_i = (y_i + \epsilon_1,  y_i + \dots, y_i + \epsilon_{15}, \epsilon_{16}, \dots, \epsilon_{30})$, and consider a set of models $\{\model_k\}$ with feature embeddings $\phi_k(x_i) = x_i[1, \dots, k]$. The optimal model in this setting is the one which uses exactly the set of `informative' features $x[1, \dots, 15]$. 

We first evaluate the relative rankings given by the true marginal likelihood with those given by our estimators. We compare $\mathcal{L}_S$, $\mathcal{L}$ and $\mathcal{L}_k$; we first observe that all methods agree on the optimal model: this is a consistent finding across all of the model selection tasks we considered. While all methods lower bound the log marginal likelihood, $\mathcal{L}_k(\data)$ and $\mathcal{L}_S(\data)$ exhibit a reduced gap compared to the naive lower bound.
In the rightmost plot of Figure~\ref{fig:lse-estimator}, we further quantify the reduction in the bias of the estimator $\mathcal{L}_k(\data)$ described in Section~\ref{sec:ml-estimation}. We use exact posterior samples (which we denote in the figure simply as posterior samples) and approximate posterior samples generated by the gradient descent procedure outlined in Algorithm~\ref{alg:estimate} using a fixed step size and thus inducing some approximation error. We find that both sampling procedures exhibit decreasing bias as the number of samples $k$ is increased, with the exact sampling procedure exhibiting a slightly smaller gap than the approximate sampling procedure.

\begin{figure}
    \centering
    \includegraphics[width=0.47\linewidth]{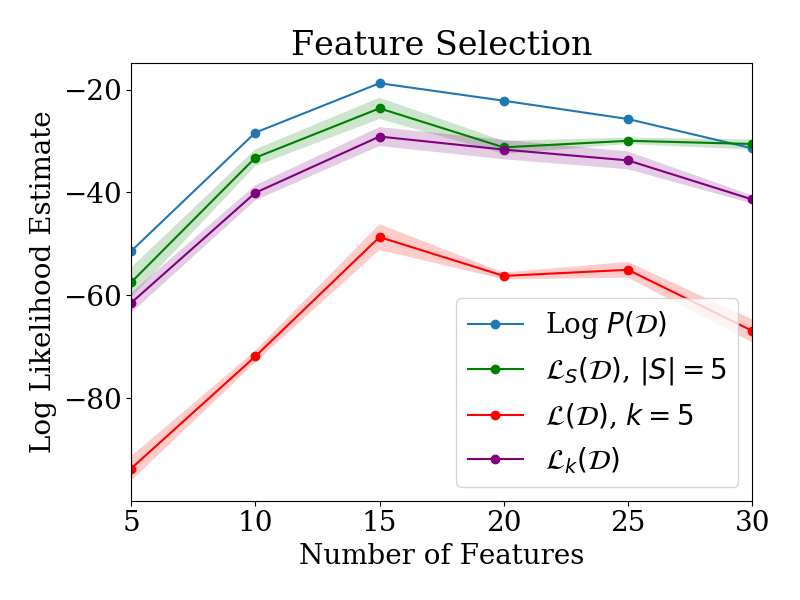}
    \includegraphics[width=0.47\linewidth]{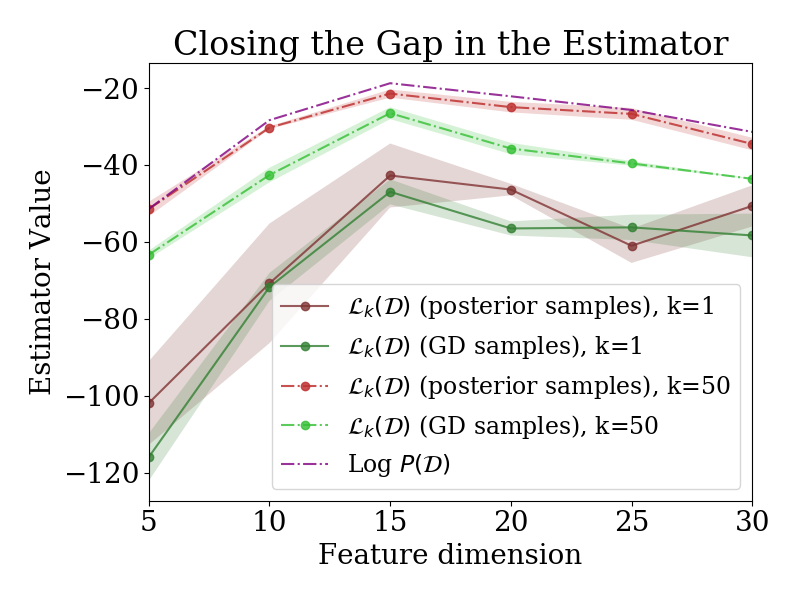}
    \caption{Left: ranking according to $\log P(\data)$, $\mathcal{L}(\data)$ with exact posterior samples, and $\mathcal{L}(\data)$ computed on samples generated by gradient descent. Right: gap between true marginal likelihood and $\mathcal{L}_k(\data)$ estimator shrinks as a function of $k$ for both exact and gradient descent-generated samples. }
    \label{fig:lse-estimator}
    \vspace{-5pt}
\end{figure}

We next empirically evaluate the claims of Proposition~\ref{prop:modelselect} in settings with relaxed assumptions. We compare the ranking given by the true log marginal likelihood, the estimated $\mathcal{L}(\data)$, and the weight assigned to each model by the trained linear regressor. We consider three variations on how sampled predictions from each model are drawn to generate the features $\phi_i$: sampling the prediction for point $\hat{Y}_i$ from $P(\hat{Y}_i | \data_{<i})$ (`concurrent sampling' -- this is the setting of Proposition \ref{prop:modelselect}), as well as two baselines: the posterior $P(\hat{Y}_i |\data)$ (`posterior sampling'), and  the prior $P(\hat{Y}_i)$ (`prior sampling'). We find that the rankings of the marginal likelihood, its lower bound, and of the ranking given by concurrent optimization all agree on the best model in all three of the model selection problems outlined previously, while the prior and posterior sampling procedure baselines do not exhibit a consistent ranking with the log ML. We visualize these results for the feature dimension selection problem in Figure~\ref{fig:ml_v_weight}; full results are shown in Figure \ref{fig:app_ml_v_weight}.

\begin{figure}
    \centering
    \includegraphics[width=0.38\linewidth]{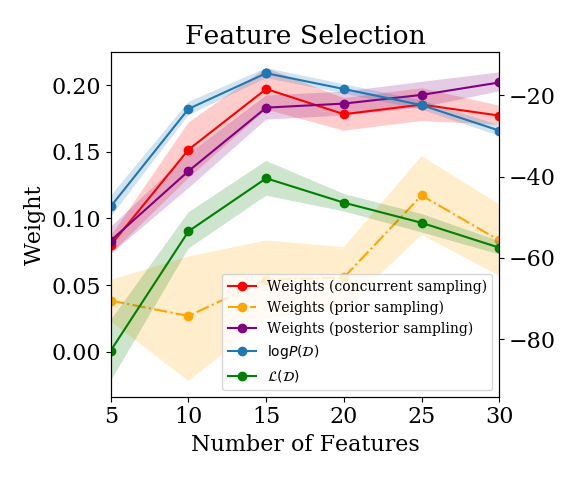}
    \includegraphics[width=0.44\linewidth]{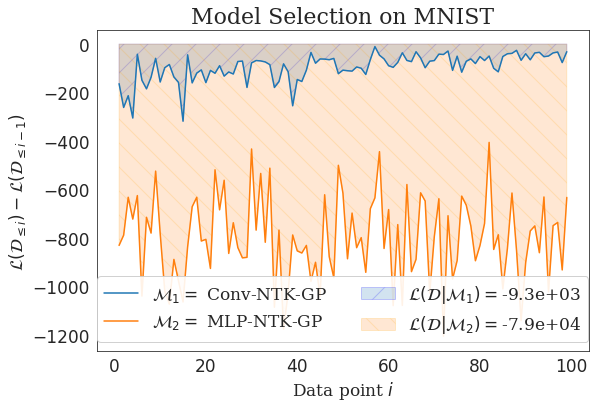}
    \hfill 
    \caption{Left: Relative rankings given by optimize-then-prune, ML, and estimated $\mathcal{L}(\data)$ on the feature selection problem. Right: visualizing the interpretation of $\mathcal{L}(\data)$ as the `area under the curve' of training losses: we plot the relative change in the estimator  $\mathcal{L}(\data_{\le i}) - \mathcal{L}(\data_{<i}) $ for convolutional and fully-connected NTK-GP models, and shade their area.
    }
    \label{fig:ml_v_weight}
\end{figure}
We further illustrate how the $\mathcal{L}(\data)$ estimator can select inductive biases in the infinite-width neural network regime in Figure~\ref{fig:ml_v_weight}. Here we evaluate the relative change in the log ML of a Gaussian Process induced by a fully-connected MLP (MLP-NTK-GP) and a convolutional neural network (Conv-NTK-GP) which performs regression on the MNIST dataset.
The fully-connected model sees a consistent decrease in its log ML with each additional data point added to the dataset, whereas the convolutional model sees the incremental change in its log ML become less negative as more data points are added as a result of its implicit bias, as well as a much higher incremental change in its log ML from the start of training.
This leads to the Conv-NTK-GP having a higher  value for $\mathcal{L}(\data)$ than the MLP-NTK-GP. We provide an analogous plot evaluating $\log P(\data)$ in the appendix.

\subsection{Training Speed, Ensemble Weight, and Generalization in DNNs} \label{sec:DNN_exp}
\svsp

We now address our conjectures from Section 3, which aim to generalize our results for linear models to deep neural networks trained with SGD. Recall that our hypothesis involves translating \textit{iterative posterior samples} to \textit{minibatch training losses over an SGD trajectory}, and \textit{bayesian model evidence} to \textit{generalization error}; we conjectured that just as the sum of the log posterior likelihoods is useful for Bayesian model selection, the sum of minibatch training losses will be useful to predict generalization error. In this section, we evaluate whether this conjecture holds for a simple convolutional neural network trained on the FashionMNIST dataset. Our results provide preliminary evidence in support of this claim, and suggest that further work investigating this relationship may reveal valuable insights into how and why neural networks generalize.  
\subsubsection{Linear Combination of DNN Architectures} \label{sec:sgd_dnn}
\ssvsp

We first evaluate whether the sum over training losses (SOTL) obtained over an SGD trajectory correlates with a model's generalization error, and whether SOTL predicts the weight assigned to a model by a linear ensemble. To do so, we train a linear combination of DNNs with SGD to determine whether SGD upweights NNs that generalize better. Further details of the experiment can be found in Appendix \ref{sec:exp_details_sgd_dnn}. Our results are summarized in Figure \ref{fig:mod_select_dnn}.

\begin{figure}
    \begin{minipage}{.27\textwidth}
    \includegraphics[ width=\linewidth]{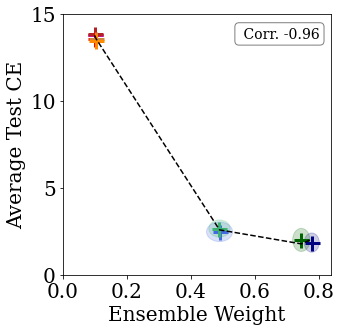}
    \end{minipage}
    \begin{minipage}{.27\textwidth}
    \includegraphics[ width=\linewidth]{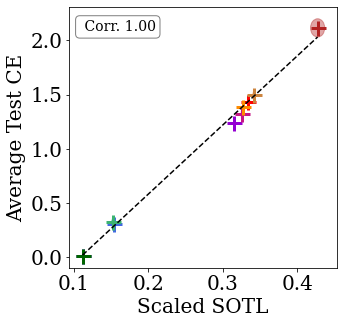}
    \end{minipage}
    \begin{minipage}{.27\textwidth}
    \includegraphics[ width=\linewidth]{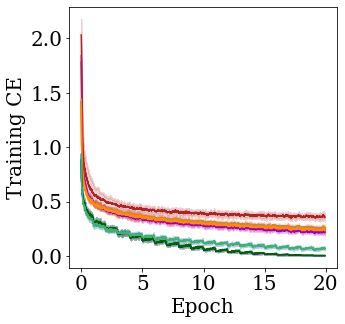}
    \end{minipage}
    \begin{minipage}{.17\textwidth}
    \includegraphics[ width=\linewidth]{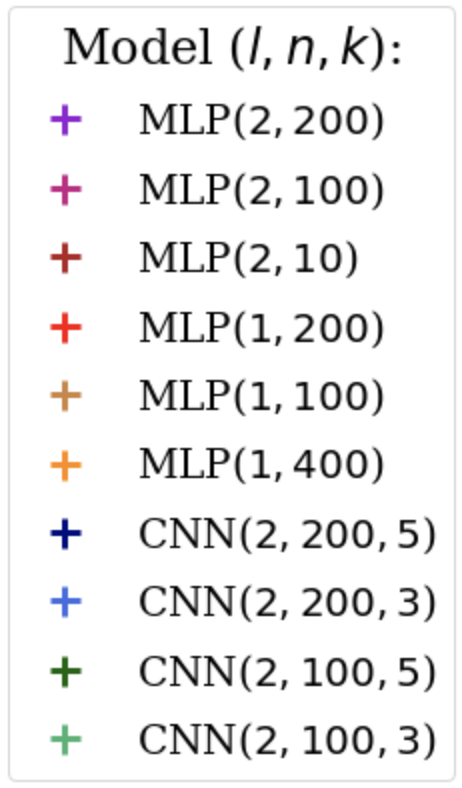}
    \end{minipage}
    \caption{Linear combinations of DNNs on FashionMNIST trained. Left: ensemble weights versus the test loss for concurrent training. Middle: sum over training losses (SOTL), standardized by the number of training samples, versus test loss for parallel training. Right: training curves for the different models trained in parallel. All results are averaged over $10$ runs, and standard deviations are shown by the shaded regions around each observation. The model parameters, given in the parentheses, are the number of layers ($l$), nodes per layer ($n$) and kernel size ($k$), respectively. }
    \label{fig:mod_select_dnn}
\end{figure}

We observe a strong correlation between SOTL and average test cross-entropy (see Figure \ref{fig:mod_select_dnn} middle column), validating that the SOTL is correlated with generalization. Further, we find that architectures with lower test error (when trained individually) are given higher weight by the linear ensembling layer -- as can be seen from the left plot in Figure \ref{fig:mod_select_dnn}. This supports our hypothesis that \textit{SGD favours models that generalize well}.

\subsubsection{Subnetwork Selection in Neural Networks} \label{sec:sgd_submodel}
\ssvsp
Finally, we evaluate whether our previous insights apply to submodels within a neural network, suggesting a potential mechanism which may bias SGD towards parameters with better generalization performance. Based on the previous experiments, we expect that nodes that have a lower sum over training errors (if evaluated as a classifier on their own) are favoured by gradient descent and therefore have a larger final weight than those which are less predictive of the data. If so, we can then view SGD followed by pruning (in the final linear layer of the network) as performing an approximation of a Bayesian model selection procedure. We replicate the model selection problem of the previous setting, but replace the individual models with the activations of the penultimate layer of a neural network, and replace the linear ensemble with the final linear layer of the network.  Full details on the experimental set-up can be found in Appendix \ref{sec:exp_details_sgd_submodels}. We find that our hypotheses hold here: SGD assigns larger weights to subnetworks that perform well, as can be seen in Figure \ref{fig:sgd_submodel}. This suggests that SGD is biased towards functions that generalize well, even within a network. We find the same trend holds for CIFAR-10, which is shown in Appendix \ref{sec:exp_details_sgd_submodels}.

\begin{figure}
    \centering
    \includegraphics[ width=\linewidth]{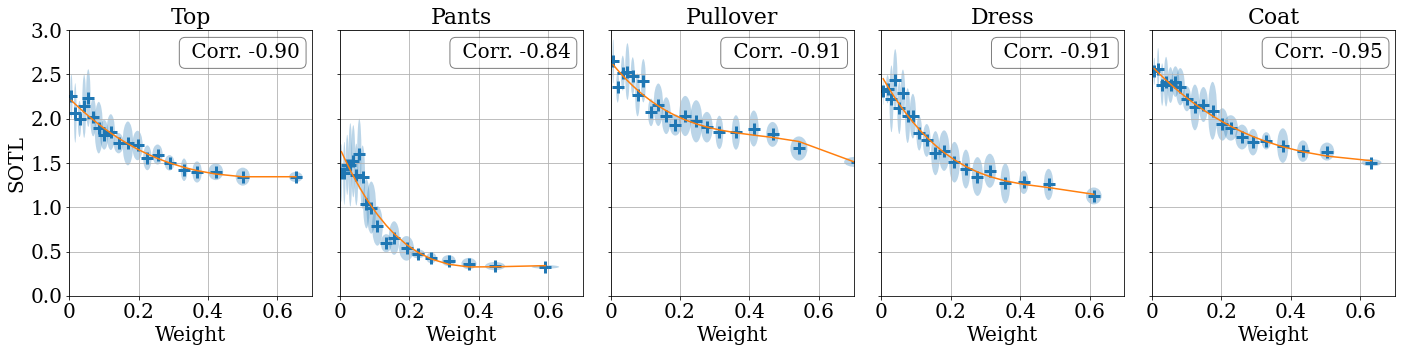}
    \caption{Weight assigned to subnetwork by SGD in a deep neural network (x-axis) versus the subnetwork performance (estimated by the sum of cross-entropy, on the y-axis) for different FashionMNIST classes. The light blue ovals denote depict $95\%$ confidence intervals, estimated over 10 seeds (i.e. 2$\sigma$ for both the weight and SOTL).  The orange line depicts the general trend.}
    \label{fig:sgd_submodel}
\end{figure}

\section{Conclusion}

In this paper, we have proposed a family of estimators of the marginal likelihood which illustrate the connection between training speed and Bayesian model selection. Because gradient descent can produce exact posterior samples in linear models, our result shows that Bayesian model selection can be done by training a linear model with gradient descent and tracking how quickly it learns. This approach also applies to the infinite-width limit of deep neural networks, whose dynamics resemble those of linear models.
We further highlight a connection between magnitude-based pruning and model selection, showing that models for which our lower bound is high will be assigned more weight by an optimal linear model combination.
This raises the question of whether similar mechanisms exist in finitely wide neural networks, which do not behave as linear models.
We provide preliminary empirical evidence that the connections shown in linear models have predictive power towards explaining generalization and training dynamics in DNNs, suggesting a promising avenue for future work.

\newpage 
\section{Broader Impact}
Due to the theoretical nature of this paper, we do not foresee any immediate applications (positive or negative) that may arise from our work. However, improvement in our understanding of generalization in deep learning may lead to a host of downstream impacts which we outline briefly here for completeness, noting that the marginal effect of this paper on such broad societal and environmental impacts is likely to be very small.
\begin{enumerate}
    \item \textbf{Safety and robustness.} Developing a stronger theoretical understanding of generalization will plausibly lead to training procedures which improve the test-set performance of deep neural networks. Improving generalization performance is crucial to ensuring that deep learning systems applied in practice behave as expected based on their training performance. 
    \item \textbf{Training efficiency and environmental impacts.} In principle, obtaining better estimates of model and sub-model performance could lead to more efficient training schemes, thus potentially reducing the carbon footprint of machine learning research.
    \item \textbf{Bias and Fairness.} The setting of our paper, like much of the related work on generalization, does not consider out-of-distribution inputs or training under constraints. If the training dataset is biased, then a method which improves the generalization performance of the model under the i.i.d. assumption will be prone to perpetuating this bias. 
\end{enumerate}

\section*{Acknowledgements}
Lisa Schut was supported by the Accenture Labs and Alan Turing Institute. 

\newpage 
\bibliographystyle{plainnat}
\bibliography{refs.bib}

\begin{thebibliography}{47}
\providecommand{\natexlab}[1]{#1}
\providecommand{\url}[1]{\texttt{#1}}
\expandafter\ifx\csname urlstyle\endcsname\relax
  \providecommand{\doi}[1]{doi: #1}\else
  \providecommand{\doi}{doi: \begingroup \urlstyle{rm}\Url}\fi

\bibitem[Arora et~al.(2019)Arora, Du, Hu, Li, and Wang]{arora2019fine}
Sanjeev Arora, Simon~S Du, Wei Hu, Zhiyuan Li, and Ruosong Wang.
\newblock Fine-grained analysis of optimization and generalization for
  overparameterized two-layer neural networks.
\newblock \emph{arXiv preprint arXiv:1901.08584}, 2019.

\bibitem[Basu(1955)]{basu1955}
D.~Basu.
\newblock On statistics independent of a complete sufficient statistic.
\newblock \emph{Sankhyā: The Indian Journal of Statistics (1933-1960)},
  15\penalty0 (4):\penalty0 377--380, 1955.
\newblock ISSN 00364452.
\newblock URL \url{http://www.jstor.org/stable/25048259}.

\bibitem[Belkin et~al.(2018)Belkin, Hsu, Ma, and Mandal]{belkin2018reconciling}
Mikhail Belkin, Daniel Hsu, Siyuan Ma, and Soumik Mandal.
\newblock Reconciling modern machine learning and the bias-variance trade-off.
\newblock \emph{arXiv preprint arXiv:1812.11118}, 2018.

\bibitem[Blei et~al.(2017)Blei, Kucukelbir, and McAuliffe]{blei2017variational}
David~M Blei, Alp Kucukelbir, and Jon~D McAuliffe.
\newblock Variational inference: A review for statisticians.
\newblock \emph{Journal of the American statistical Association}, 112\penalty0
  (518):\penalty0 859--877, 2017.

\bibitem[Blundell et~al.(2015)Blundell, Cornebise, Kavukcuoglu, and
  Wierstra]{blundell2015weight}
Charles Blundell, Julien Cornebise, Koray Kavukcuoglu, and Daan Wierstra.
\newblock Weight uncertainty in neural network.
\newblock In \emph{International Conference on Machine Learning}, pages
  1613--1622, 2015.

\bibitem[Damianou and Lawrence(2013)]{damianou13a}
Andreas Damianou and Neil Lawrence.
\newblock Deep gaussian processes.
\newblock volume~31 of \emph{Proceedings of Machine Learning Research}, pages
  207--215, Scottsdale, Arizona, USA, 29 Apr--01 May 2013. PMLR.
\newblock URL \url{http://proceedings.mlr.press/v31/damianou13a.html}.

\bibitem[de~G.~Matthews et~al.(2018)de~G.~Matthews, Hron, Rowland, Turner, and
  Ghahramani]{matthews2018gaussian}
Alexander~G. de~G.~Matthews, Jiri Hron, Mark Rowland, Richard~E. Turner, and
  Zoubin Ghahramani.
\newblock Gaussian process behaviour in wide deep neural networks.
\newblock In \emph{International Conference on Learning Representations}, 2018.
\newblock URL \url{https://openreview.net/forum?id=H1-nGgWC-}.

\bibitem[Dutordoir et~al.(2020)Dutordoir, van~der Wilk, Artemev, and
  Hensman]{dutordoir20a}
Vincent Dutordoir, Mark van~der Wilk, Artem Artemev, and James Hensman.
\newblock Bayesian image classification with deep convolutional gaussian
  processes.
\newblock volume 108 of \emph{Proceedings of Machine Learning Research}, pages
  1529--1539, Online, 26--28 Aug 2020. PMLR.
\newblock URL \url{http://proceedings.mlr.press/v108/dutordoir20a.html}.

\bibitem[Duvenaud et~al.(2016)Duvenaud, Maclaurin, and
  Adams]{duvenaud2016early}
David Duvenaud, Dougal Maclaurin, and Ryan Adams.
\newblock Early stopping as nonparametric variational inference.
\newblock In \emph{Artificial Intelligence and Statistics}, pages 1070--1077,
  2016.

\bibitem[Dziugaite and Roy(2017)]{dziugaite_computing_2017}
Gintare~Karolina Dziugaite and Daniel~M Roy.
\newblock Computing nonvacuous generalization bounds for deep (stochastic)
  neural networks with many more parameters than training data.
\newblock \emph{arXiv preprint arXiv:1703.11008}, 2017.

\bibitem[Dziugaite and Roy(2018)]{dziugaite_data-dependent_2018}
Gintare~Karolina Dziugaite and Daniel~M Roy.
\newblock Data-dependent {PAC}-{Bayes} priors via differential privacy.
\newblock In S.~Bengio, H.~Wallach, H.~Larochelle, K.~Grauman, N.~Cesa-Bianchi,
  and R.~Garnett, editors, \emph{{NeurIPS} 31}, pages 8430--8441. 2018.

\bibitem[Fort et~al.(2019)Fort, Nowak, Jastrzebski, and
  Narayanan]{fort2019stiffness}
Stanislav Fort, Pawe{\l}~Krzysztof Nowak, Stanislaw Jastrzebski, and Srini
  Narayanan.
\newblock Stiffness: A new perspective on generalization in neural networks.
\newblock \emph{arXiv preprint arXiv:1901.09491}, 2019.

\bibitem[Frankle and Carbin(2019)]{frankle2018the}
Jonathan Frankle and Michael Carbin.
\newblock The lottery ticket hypothesis: Finding sparse, trainable neural
  networks.
\newblock In \emph{International Conference on Learning Representations}, 2019.
\newblock URL \url{https://openreview.net/forum?id=rJl-b3RcF7}.

\bibitem[Gal and Ghahramani(2016)]{gal2016dropout}
Yarin Gal and Zoubin Ghahramani.
\newblock Dropout as a bayesian approximation: Representing model uncertainty
  in deep learning.
\newblock In \emph{international conference on machine learning}, pages
  1050--1059, 2016.

\bibitem[Germain et~al.(2016)Germain, Bach, Lacoste, and
  Lacoste-Julien]{germain_pac-bayesian_2016}
Pascal Germain, Francis Bach, Alexandre Lacoste, and Simon Lacoste-Julien.
\newblock {PAC}-{Bayesian} theory meets {Bayesian} inference.
\newblock In \emph{Advances in {Neural} {Information} {Processing} {Systems}},
  pages 1884--1892, 2016.

\bibitem[Graves(2011)]{graves2011practical}
Alex Graves.
\newblock Practical variational inference for neural networks.
\newblock In J.~Shawe-Taylor, R.~S. Zemel, P.~L. Bartlett, F.~Pereira, and
  K.~Q. Weinberger, editors, \emph{Advances in Neural Information Processing
  Systems 24}, pages 2348--2356. Curran Associates, Inc., 2011.
\newblock URL
  \url{http://papers.nips.cc/paper/4329-practical-variational-inference-for-neural-networks.pdf}.

\bibitem[Hardt et~al.(2015)Hardt, Recht, and Singer]{hardt2015train}
Moritz Hardt, Benjamin Recht, and Yoram Singer.
\newblock Train faster, generalize better: Stability of stochastic gradient
  descent, 2015.

\bibitem[He et~al.(2020)He, Lakshminarayanan, and Teh]{he2020bayesian}
Bobby He, Balaji Lakshminarayanan, and Yee~Whye Teh.
\newblock Bayesian deep ensembles via the neural tangent kernel.
\newblock \emph{arXiv preprint arXiv:2007.05864}, 2020.

\bibitem[Hinton and Van~Camp(1993)]{hinton1993keeping}
Geoffrey~E Hinton and Drew Van~Camp.
\newblock Keeping the neural networks simple by minimizing the description
  length of the weights.
\newblock In \emph{Proceedings of the sixth annual conference on Computational
  learning theory}, pages 5--13, 1993.

\bibitem[Jacot et~al.(2018)Jacot, Gabriel, and Hongler]{jacot2018neural}
Arthur Jacot, Franck Gabriel, and Cl{\'e}ment Hongler.
\newblock Neural tangent kernel: Convergence and generalization in neural
  networks.
\newblock In \emph{Advances in neural information processing systems}, pages
  8571--8580, 2018.

\bibitem[Jiang et~al.(2019)Jiang, Neyshabur, Mobahi, Krishnan, and
  Bengio]{jiang2019fantastic}
Yiding Jiang, Behnam Neyshabur, Hossein Mobahi, Dilip Krishnan, and Samy
  Bengio.
\newblock Fantastic generalization measures and where to find them, 2019.

\bibitem[Kalimeris et~al.(2019)Kalimeris, Kaplun, Nakkiran, Edelman, Yang,
  Barak, and Zhang]{kalimeris2019sgd}
Dimitris Kalimeris, Gal Kaplun, Preetum Nakkiran, Benjamin Edelman, Tristan
  Yang, Boaz Barak, and Haofeng Zhang.
\newblock Sgd on neural networks learns functions of increasing complexity.
\newblock In \emph{Advances in Neural Information Processing Systems}, pages
  3491--3501, 2019.

\bibitem[Khan et~al.(2019)Khan, Immer, Abedi, and Korzepa]{khan2019approximate}
Mohammad Emtiyaz~E Khan, Alexander Immer, Ehsan Abedi, and Maciej Korzepa.
\newblock Approximate inference turns deep networks into gaussian processes.
\newblock In H.~Wallach, H.~Larochelle, A.~Beygelzimer, F.~d'~Alch\'{e}-Buc,
  E.~Fox, and R.~Garnett, editors, \emph{Advances in Neural Information
  Processing Systems 32}, pages 3094--3104. Curran Associates, Inc., 2019.
\newblock URL
  \url{http://papers.nips.cc/paper/8573-approximate-inference-turns-deep-networks-into-gaussian-processes.pdf}.

\bibitem[Kullback and Leibler(1951)]{kullback1951}
S.~Kullback and R.~A. Leibler.
\newblock On information and sufficiency.
\newblock \emph{Ann. Math. Statist.}, 22\penalty0 (1):\penalty0 79--86, 03
  1951.
\newblock \doi{10.1214/aoms/1177729694}.
\newblock URL \url{https://doi.org/10.1214/aoms/1177729694}.

\bibitem[Lakshminarayanan et~al.(2017)Lakshminarayanan, Pritzel, and
  Blundell]{lakshminarayanan2017simple}
Balaji Lakshminarayanan, Alexander Pritzel, and Charles Blundell.
\newblock Simple and scalable predictive uncertainty estimation using deep
  ensembles.
\newblock In \emph{Advances in neural information processing systems}, pages
  6402--6413, 2017.

\bibitem[Lee et~al.(2018)Lee, Sohl-dickstein, Pennington, Novak, Schoenholz,
  and Bahri]{lee2018deep}
Jaehoon Lee, Jascha Sohl-dickstein, Jeffrey Pennington, Roman Novak, Sam
  Schoenholz, and Yasaman Bahri.
\newblock Deep neural networks as gaussian processes.
\newblock In \emph{International Conference on Learning Representations}, 2018.
\newblock URL \url{https://openreview.net/forum?id=B1EA-M-0Z}.

\bibitem[MacKay(1992)]{mackay1992bayesian}
David~JC MacKay.
\newblock \emph{Bayesian methods for adaptive models}.
\newblock PhD thesis, California Institute of Technology, 1992.

\bibitem[MacKay(2003)]{mackay2003information}
David~JC MacKay.
\newblock \emph{Information theory, inference and learning algorithms}.
\newblock Cambridge university press, 2003.

\bibitem[Maddox et~al.(2019)Maddox, Izmailov, Garipov, Vetrov, and
  Wilson]{maddox2019simple}
Wesley~J Maddox, Pavel Izmailov, Timur Garipov, Dmitry~P Vetrov, and
  Andrew~Gordon Wilson.
\newblock A simple baseline for bayesian uncertainty in deep learning.
\newblock In \emph{Advances in Neural Information Processing Systems}, pages
  13132--13143, 2019.

\bibitem[Mandt et~al.(2017)Mandt, Hoffman, and Blei]{mandt2017stochastic}
Stephan Mandt, Matthew~D Hoffman, and David~M Blei.
\newblock Stochastic gradient descent as approximate bayesian inference.
\newblock \emph{The Journal of Machine Learning Research}, 18\penalty0
  (1):\penalty0 4873--4907, 2017.

\bibitem[Matthews et~al.(2017)Matthews, Hron, Turner, and
  Ghahramani]{matthews2017}
Alexander G de~G Matthews, Jiri Hron, Richard~E Turner, and Zoubin Ghahramani.
\newblock Sample-then-optimize posterior sampling for bayesian linear models.
\newblock \emph{Neural Information Processing Systems}, 2017.

\bibitem[McAllester(1999)]{mcallester1999}
David~A. McAllester.
\newblock Some {PAC}-{Bayesian} {Theorems}.
\newblock \emph{Machine Learning}, 37\penalty0 (3):\penalty0 355--363, 1999.

\bibitem[Nagarajan and Kolter(2019)]{nagarajan2019uniform}
Vaishnavh Nagarajan and J.~Zico Kolter.
\newblock Uniform convergence may be unable to explain generalization in deep
  learning.
\newblock In H.~Wallach, H.~Larochelle, A.~Beygelzimer, F.~d'~Alche-Buc,
  E.~Fox, and R.~Garnett, editors, \emph{Advances in Neural Information
  Processing Systems 32}, pages 11615--11626. Curran Associates, Inc., 2019.

\bibitem[Nakkiran et~al.(2019)Nakkiran, Kaplun, Bansal, Yang, Barak, and
  Sutskever]{nakkiran2019deep}
Preetum Nakkiran, Gal Kaplun, Yamini Bansal, Tristan Yang, Boaz Barak, and Ilya
  Sutskever.
\newblock Deep double descent: Where bigger models and more data hurt.
\newblock \emph{arXiv preprint arXiv:1912.02292}, 2019.

\bibitem[Negrea et~al.(2019)Negrea, Haghifam, Dziugaite, Khisti, and
  Roy]{negrea2019information}
Jeffrey Negrea, Mahdi Haghifam, Gintare~Karolina Dziugaite, Ashish Khisti, and
  Daniel~M Roy.
\newblock Information-theoretic generalization bounds for sgld via
  data-dependent estimates.
\newblock In \emph{Advances in Neural Information Processing Systems}, pages
  11015--11025, 2019.

\bibitem[Osband et~al.(2018)Osband, Aslanides, and
  Cassirer]{osband2018randomized}
Ian Osband, John Aslanides, and Albin Cassirer.
\newblock Randomized prior functions for deep reinforcement learning.
\newblock In \emph{Advances in Neural Information Processing Systems}, pages
  8617--8629, 2018.

\bibitem[Rahimi and Recht(2008)]{rahimi2008random}
Ali Rahimi and Benjamin Recht.
\newblock Random features for large-scale kernel machines.
\newblock In \emph{Advances in neural information processing systems}, pages
  1177--1184, 2008.

\bibitem[Rasmussen(2003)]{rasmussen2003gaussian}
Carl~Edward Rasmussen.
\newblock Gaussian processes in machine learning.
\newblock In \emph{Summer School on Machine Learning}, pages 63--71. Springer,
  2003.

\bibitem[Rasmussen and Ghahramani(2001)]{rasmussen2001occam}
Carl~Edward Rasmussen and Zoubin Ghahramani.
\newblock Occam's razor.
\newblock In \emph{Advances in neural information processing systems}, pages
  294--300, 2001.

\bibitem[Ru et~al.(2020)Ru, Lyle, Schut, van~der Wilk, and
  Gal]{ru2020revisiting}
Binxin Ru, Clare Lyle, Lisa Schut, Mark van~der Wilk, and Yarin Gal.
\newblock Revisiting the train loss: an efficient performance estimator for
  neural architecture search, 2020.

\bibitem[Smith and Le(2017)]{smith2017bayesian}
Samuel~L Smith and Quoc~V Le.
\newblock A bayesian perspective on generalization and stochastic gradient
  descent.
\newblock \emph{arXiv preprint arXiv:1710.06451}, 2017.

\bibitem[Smith and Le(2018)]{smith2018}
Samuel~L. Smith and Quoc~V. Le.
\newblock A bayesian perspective on generalization and stochastic gradient
  descent.
\newblock In \emph{International Conference on Learning Representations}, 2018.
\newblock URL \url{https://openreview.net/forum?id=BJij4yg0Z}.

\bibitem[Valle-P{\'e}rez et~al.(2018)Valle-P{\'e}rez, Camargo, and
  Louis]{valle2018deep}
Guillermo Valle-P{\'e}rez, Chico~Q Camargo, and Ard~A Louis.
\newblock Deep learning generalizes because the parameter-function map is
  biased towards simple functions.
\newblock \emph{arXiv preprint arXiv:1805.08522}, 2018.

\bibitem[van~der Wilk et~al.(2018)van~der Wilk, Bauer, John, and
  Hensman]{van_der_wilk_learning_2018}
M.~van~der Wilk, M.~Bauer, S.~John, and J.~Hensman.
\newblock Learning {Invariances} using the {Marginal} {Likelihood}.
\newblock \emph{arXiv e-prints}, August 2018.
\newblock \_eprint: 1808.05563.

\bibitem[Welling and Teh(2011)]{welling2011bayesian}
Max Welling and Yee~W Teh.
\newblock Bayesian learning via stochastic gradient langevin dynamics.
\newblock In \emph{Proceedings of the 28th international conference on machine
  learning (ICML-11)}, pages 681--688, 2011.

\bibitem[Wilson and Izmailov(2020)]{wilson2020bayesian}
Andrew~Gordon Wilson and Pavel Izmailov.
\newblock Bayesian deep learning and a probabilistic perspective of
  generalization.
\newblock \emph{arXiv preprint arXiv:2002.08791}, 2020.

\bibitem[Zhang et~al.(2016)Zhang, Bengio, Hardt, Recht, and
  Vinyals]{zhang2016understanding}
Chiyuan Zhang, Samy Bengio, Moritz Hardt, Benjamin Recht, and Oriol Vinyals.
\newblock Understanding deep learning requires rethinking generalization.
\newblock \emph{arXiv preprint arXiv:1611.03530}, 2016.

\end{thebibliography}
\newpage 
\appendix

\section{Proofs of Theoretical Results} \label{sec:proofs}

\PropLk*
\begin{proof}
The result for $\mathcal{L}$ follows from a straightforward derivation:
\begin{align}
    \mathcal{L}(\data) &= \sum \int \log P(\data_i|\theta) dP(\theta|\data_{<i})  \\
    &= \sum \int \log [\frac{ P(\data_i|\theta) P(\theta | \data_{<i}) P(\data_i|\data_{<i})}{P(\theta | \data_{<i}) P(\data_i|\data_{<i})}] dP(\theta|\data_{<i})\\
    &= \sum \int \log\frac{ P(\theta|\data_{\leq i}))}{P(\theta|\data_{<i})} dP(\theta|\data_{<i}) + \sum \log P(\data_i| \data_{<i}) \\
    &= \sum  \bigg (  \log P(\data_i|\data_{<i})-\KL(P(\theta|\data_{<i})|| P(\theta|\data_{\leq i})) \bigg ) \\
    &= \log P(\data) - \sum_{i=1}^n \KL(P(\theta|\data_{<i})||P(\theta|\data_{\leq i})).
\end{align}
The result for $\hat{\mathcal{L}}_k$ follows immediately from Jensen's inequality, yielding
\begin{equation}
    \sum \mathbb{E}[\log \sum_{j=1}^k \frac{1}{k} p(\data_i|\theta_j)] \leq \sum \log \mathbb{E}[ \sum_{j=1}^k \frac{1}{k} p(\data_i|\theta_j)] =\sum \log \mathbb{E}[ p(\data_i|\theta_j)] =  \log P(\data) \; .
\end{equation}
Because $\mathcal{L}_k$ applies Jensen's inequality to a random variable with decreasing variance as a function of $k$, we expect the bias of $\mathcal{L}_k$ to decrease as $k$ grows, an observation characterized in Section \ref{sec:experiments}.
\end{proof}
\PropLS*
\begin{proof}
To show that the sum of the estimated log likelihoods is a lower bound on the log marginal likelihood, it suffices to show that each term in the sum of the estimates is a lower bound on the corresponding term in log marginal likelihood expression. Thus, without loss of generality we consider a single data point $\data_i = (x, y)$ and posterior distribution $p(y|x, \data_{<i})=\mathcal{N}(\mu, \sigma^2)$. 

Let $y \in \mathbb{R}$, $\hat{\mu}, \hat{\sigma}$ the standard estimators for sample mean and variance given sample $\hat{Y} \in \mathbb{R}^k$ sampled from $\mathcal{N}(\mu, \sigma^2)$. We want to show

\begin{equation}
\mathbb{E}_{\hat{Y} \sim \mathcal{N}(\mu, \sigma^2)}[\ln p(y|\hat{\mu}, \hat{\sigma}^2)] \leq \ln p(y|\mu, \sigma^2).
\end{equation}
We first note that $\hat{\mu}(\hat{Y}) \perp \hat{\sigma}(\hat{Y})$ for $\hat{Y}$ a collection of i.i.d. Gaussian random variables \citep{basu1955}. 
We also take advantage of the fact that the log likelihood of a Gaussian is concave with respect to its $\mu$ parameter and its $\sigma^2$ parameter. Notably, the log likelihood is \textit{not} concave w.r.t. the joint pair $(\mu, \sigma^2)$, but because the our estimators are independent, this will not be a problem for us. 
We proceed as follows by first decomposing the expectation over the samples $\hat{Y}$ into an expectation over $\hat{\mu}$ and $\widehat{\sigma^2}$
\begin{align}
\mathbb{E}_{X \sim \mathcal{N}(\mu, \sigma^2)}[\ln p(y| \hat{\mu}, \hat{\sigma}^2)] &= \mathbb{E}_{\hat{\mu}, Y_2, \dots, Y_N} \ln p(y|\hat{\mu}, \hat{\sigma}^2) \\
&= \mathbb{E}_{\hat{\mu}} \mathbb{E}_{\hat{\sigma}^2} \ln p(y|\hat{\mu}, \hat{\sigma}^2)
\intertext{We apply Jensen's inequality first to the inner expectation, then to the outer.}
&\leq \mathbb{E}_{\hat{\mu}} \ln p(y|\hat{\mu}, \mathbb{E}[\hat{\sigma}^2]) =  \mathbb{E}_{\hat{\mu} }\ln p(y|\hat{\mu}, \sigma^2)   \\
&\leq \ln p(y|\mu, \sigma^2)
\end{align}
So we obtain our lower bound.
\end{proof}

\ThmSTO*

\begin{proof}
The heavy lifting for this result has largely been achieved by Propositions \ref{prop:lk} and \ref{prop:ls}, which state that provided the samples $\theta^{i}_j$ are distributed according to the posterior, the inequalities will hold. It therefore remains only to show that the sample-then-optimize procedure yields samples from the posterior. The proof of this result can be found in Lemma 3.8 of \citet{osband2018randomized}, who show that the optimum for the gradient descent procedure described in Algorithm \ref{alg:estimate} does indeed correspond to the posterior distribution for each subset $\data_{<i}$. 

Finally, it is straightforward to express the lower bound estimator $\hat{\mathcal{L}}$ as the sum of regression losses. We obtain this result by showing that the inequality holds for each term $\log P(\data_i|\theta_i)$ in the summation. 

\begin{align}
 \log P(\data_i|\theta) &= \log[ \exp \bigg (-\frac{(\theta^\top x_i - y_i)^2 }{2 \sigma^2} \bigg )\frac{1}{\sqrt{2\pi}\sigma} ] \\
 &= -\frac{(\theta^\top x_i - y_i)^2 }{2 \sigma^2} -\frac{1}{2} \log (2 \pi \sigma^2) \\
 &= c_1 \ell_2(\data_i, \theta) + c_2
\end{align}

We note that in practice, the solutions found by gradient descent for finite step size and finite number of steps will not necessarily correspond to the exact local optimum. However, it is straightforward to bound the error obtained from this approximate sampling in terms of the distance of $\theta$ from the optimum $\theta^*$. Denoting the difference $|\theta - \theta^*|$ by $\delta$, we get
\begin{align}
   | \log P(\data_i|\theta^*) - \log P(\data_i|\theta)| &=  |
    \frac{((\theta^*)^\top x_i - y_i)^2 }{2 \sigma^2} -  \frac{((\theta)^\top x_i - y_i)^2 }{2 \sigma^2}| \\
    & \leq 
    \frac{1}{2 \sigma^2} | (\theta^*)^\top x_i - \theta^\top x_i|^2 \\
    &\leq |( (\theta^*)^\top x_i)^2 - (\theta^\top x_i)^2| + |2y||\theta^\top x - (\theta^*)^\top x| \\
    &\leq |(\theta^* - \theta)^\top x + 2((\theta^*)^\top x)((\theta^*-\theta)^\top x)| + |2y||\theta^\top x - (\theta^*)^\top x| \\
    &\leq |\theta^* - \theta||x| + 2|\theta^* x||\theta^* - \theta||x| + |2y||x||\theta - \theta^*|
\end{align}
and so the error in the estimate of $\log P(\data | \theta)$ will be proportional to the distance $|\theta - \theta^*|$ induced by the approximate optimization procedure.
\end{proof}

\CorNTK*
\begin{proof}
Follows immediately from the results of \citet{he2020bayesian} stating that the the limiting distribution of $f^k_\infty$ is precisely $P(f|\data^n_{\le k}, \model)$. We therefore obtain the same result as for Theorem \ref{thm:sto}, plugging in the kernel gradient descent procedure on $f$ for the parameter-space gradient descent procedure on $\theta$.
\end{proof}
The following Lemma will be useful in order to prove Proposition~\ref{prop:modelselect}. Intuitively, this result states that in a linear regression problem in which each feature $\phi_i$ is `normalized' (the dot product $\langle \phi_i, y \rangle = \langle \phi_j, y \rangle = \alpha$ for some $\alpha$ and all $i, j$) and `independent' (i.e. $\langle \Pi_{y^\perp} \phi_i, \Pi_{y^\perp} \phi_j \rangle = 0$), then the optimal linear regression solution assigns highest weight to the feature which obtains the least error in predicting $y$ on its own.
\begin{lemma}
Let $y \in \mathbb{R}^n$, and $\Phi \in \mathbb{R}^{d \times d}$ be a design matrix such that $\Phi[:, j] = \alpha y + \epsilon_j \forall j$ for some fixed $\alpha \geq 0$, with $\epsilon \in y^\perp$, and $\epsilon_i^\top \epsilon_j = 0$ for all $i \neq j$. Let $w^*$ be the solution to the least squares regression problem on $\Phi$ and $y$. Then 
\begin{equation}
    \min_i w_i = \min_i \|f_i(x) - y\|^2 = \max_i \mathcal{L}(\model_i)
\end{equation}
\end{lemma}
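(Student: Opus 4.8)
The plan is to attack this directly through the normal equations for the least-squares problem $\min_w \|\Phi w - y\|^2$, namely $\Phi^\top \Phi\, w^* = \Phi^\top y$, and to use the two orthogonality hypotheses to put $\Phi^\top\Phi$ into a rank-one-plus-diagonal form that can be inverted in closed form. Writing each column as $\phi_i = \alpha y + \epsilon_i$ with $\epsilon_i \in y^\perp$ and $\epsilon_i^\top\epsilon_j = 0$ for $i\neq j$, the Gram entries collapse: $\phi_i^\top\phi_j = \alpha^2\|y\|^2 + \delta_{ij}\|\epsilon_i\|^2$, since every cross term $y^\top\epsilon$ vanishes by $\epsilon\in y^\perp$ and the off-diagonal $\epsilon_i^\top\epsilon_j$ vanish by independence. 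Hence $\Phi^\top\Phi = \alpha^2\|y\|^2\,\mathbf{1}\mathbf{1}^\top + D$ with $D = \mathrm{diag}(\|\epsilon_1\|^2,\dots,\|\epsilon_d\|^2)$, and likewise $\Phi^\top y = \alpha\|y\|^2\,\mathbf{1}$.

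Next I would solve the system $(\alpha^2\|y\|^2\mathbf{1}\mathbf{1}^\top + D)\,w^* = \alpha\|y\|^2\,\mathbf{1}$. Rather than invoking Sherman--Morrison, the cleanest route is to introduce the scalar $s = \sum_j w_j^*$, so that the $i$-th equation reads $\alpha^2\|y\|^2 s + \|\epsilon_i\|^2 w_i^* = \alpha\|y\|^2$, giving $w_i^* = \alpha\|y\|^2(1-\alpha s)/\|\epsilon_i\|^2$. Summing over $i$ and solving the resulting scalar equation for $s$ yields $1-\alpha s = \bigl(1 + \alpha^2\|y\|^2\sum_j\|\epsilon_j\|^{-2}\bigr)^{-1} > 0$, so the common prefactor $A := \alpha\|y\|^2(1-\alpha s)$ is a strictly positive constant independent of $i$ (under the nondegeneracy conditions $\alpha>0$, $y\neq 0$, and each $\epsilon_i\neq 0$ so that $D$ is invertible). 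Therefore $w_i^* = A/\|\epsilon_i\|^2$, i.e. $w_i^*$ is a strictly decreasing function of $\|\epsilon_i\|^2$.

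Finally I would tie $\|\epsilon_i\|^2$ to both the standalone prediction error and the marginal-likelihood bound. Orthogonality gives $\|\phi_i - y\|^2 = \|(\alpha-1)y + \epsilon_i\|^2 = (\alpha-1)^2\|y\|^2 + \|\epsilon_i\|^2$, whose only $i$-dependent term is $\|\epsilon_i\|^2$; hence the feature minimizing $\|\phi_i - y\|^2$ is exactly the one minimizing $\|\epsilon_i\|^2$, which by the previous step is the one maximizing $w_i^*$. Since Theorem~\ref{thm:sto} expresses $\mathcal{L}(\model_i)$ as a constant minus the expected $\ell_2$ regression loss, smallest prediction error corresponds to largest $\mathcal{L}(\model_i)$, so the index maximizing $w_i^*$ coincides with the index minimizing $\|\phi_i-y\|^2$ and maximizing $\mathcal{L}(\model_i)$ (matching the conclusion of Proposition~\ref{prop:modelselect}).

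The main obstacle is the middle step: establishing that the prefactor $1-\alpha s$ is both $i$-independent and strictly positive. It is precisely this positivity that converts $w_i^* = A/\|\epsilon_i\|^2$ from a mere rational expression into a genuinely monotone relationship between weight and residual norm, and it is the place where the hypotheses can fail (e.g. $\alpha\le 0$, or some $\epsilon_i=0$ making $D$ singular). I would therefore state the nondegeneracy assumptions explicitly at that point and verify the sign carefully, since the rest of the argument is then a short consequence of the orthogonal decomposition of the residual.
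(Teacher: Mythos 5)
Your proof is correct, but it takes a genuinely different route from the paper's. The paper never writes down the normal equations: instead it decomposes the design matrix as $\Phi = \Phi_y + \Phi_\perp$ with $\Phi_y = \alpha y \mathbbm{1}^\top$, expands the loss as $\|\Phi w - y\|^2 = \|y\|^2(1 - \alpha \mathbbm{1}^\top w)^2 + \sum_i \sigma_i^2 w_i^2$ with $\sigma_i^2 = \|\epsilon_i\|^2$, and then argues in two stages: whatever value $\beta = \mathbbm{1}^\top w^*$ takes at the optimum, $w^*$ must also solve the constrained problem $\min\{\sum_i \sigma_i^2 w_i^2 : \mathbbm{1}^\top w = \beta\}$, and Lagrange multipliers then force $w_i^* \propto \sigma_i^{-2}$. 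Your route --- collapsing the Gram matrix to rank-one-plus-diagonal and solving the normal equations by hand via the scalar $s = \sum_j w_j^*$ --- reaches the same conclusion $w_i^* = A/\|\epsilon_i\|^2$, but with an explicit, provably positive prefactor $A$. That is a real advantage: the paper's argument implicitly needs the proportionality constant (equivalently, the sign of the optimal $\beta$) to be positive, since a negative constant would make the largest weight correspond to the \emph{largest} residual variance; the paper never verifies this (and its stated Lagrange solution $w_i^* = \frac{\alpha}{\sum \sigma_i^{-2}}\sigma_i^{-2}$ even conflates $\alpha$ with $\beta$). Your computation $1 - \alpha s = \bigl(1 + \alpha^2\|y\|^2\sum_j \|\epsilon_j\|^{-2}\bigr)^{-1} > 0$ closes exactly that gap, at the cost of the nondegeneracy assumptions ($\alpha > 0$, each $\epsilon_i \neq 0$) that you rightly make explicit. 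What the paper's decomposition buys instead is reusability: the same parallel/orthogonal expansion of the loss is what the proof of Proposition~\ref{prop:modelselect} extends to the noisy setting, where the diagonal simply picks up the extra term $\mathbb{E}\|\zeta_i\|^2$, whereas your closed-form inversion would need to be redone there. Your final step tying the smallest standalone error to the largest $\mathcal{L}(\model_i)$ via the loss--likelihood correspondence of Theorem~\ref{thm:sto} matches the paper's intent, which it leaves to the proposition's proof.
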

\begin{proof}
We express the minimization problem as follows. We let $\phi(x)$ = $( f_1(x), \dots, f_k(x))$, where $f_i(x) = \alpha y + \epsilon_i$, with $\epsilon_i \perp \epsilon_j $. We denote by $\mathbbm{1}$ the vector containing all ones (of length $k$). We observe that we can decompose the design matrix $\Phi$ into one component whose columns are parallel to $y$, denoted $\Phi_y$, and one component whose columns are orthogonal to $y$, denoted $\Phi_\perp$. Let $\sigma^2_i = \|\epsilon_i\|^2$.  By assumption, $\Phi_y = \alpha y \mathbbm{1}^\top$, and $\Phi_\perp^\top \Phi_\perp = \text{diag}(\sigma^2_1, \dots, \sigma^2_n) = \Sigma$. We then observe the following decomposition of the squared error loss of a weight vector $w$, denoted $\ell(w)$.
\begin{align*}
\ell(w) &= \| \Phi w - y\|^2 = (\Phi w - y)^\top (\Phi w - y) \\
&= ((\Phi_y + \Phi_\perp) w - y)^\top ((\Phi_y + \Phi_\perp)w - y)\\
&=(\Phi_y w - y)^\top (\Phi_y w - y) + w^\top \Phi_\perp^\top \Phi_\perp w \\
&= \|y\|^2 \|1 - \alpha \mathbbm{1}^\top w \|^2  + \sum \sigma_i^2 w_i \\
\end{align*}
In particular, the loss decomposes into a term which depends on the sum of the $w_i$, and another term which will depend on the norm of the component of each model's predictions orthogonal to the targets $y$.


As this is a quadratic optimization problem, it is clear that an optimal $w$ exists, and so $w^\top \mathbbm{1}$ will take some finite value, say $\beta$. We will show that for any fixed $\beta$, the solution to the minimization problem
\begin{equation}
    \min_w \sum w_i \sigma_i^2 : w^\top \mathbbm{1} = \beta
\end{equation}
is such that the argmax over $i$ of $w_i$ is equal to that of the minimum variance. This follows by applying the method of Lagrange multipliers to obtain that the optimal $w$ satisfies \begin{equation}w^*_i = \frac{\alpha}{\sum \sigma_i^{-2}} \frac{1}{\sigma_i^2}.\end{equation}
In particular, $w^*_i$ is inversely proportional to the variance of $f_i$, and so is maximized for $i = \argmin_i \mathbb{E}\|f_i(x) - y\|^2$. 

\end{proof}
\PropBMS*
\begin{proof}
We first clarify the independence assumptions as they pertain to the assumptions of the previous lemma: writing $\Phi[:, i]$ as $f_i(x) + \zeta_i = \alpha y + \epsilon_i + \zeta_i$ with $\zeta_i \sim \mathcal{N}(0, \Sigma_i)$ corresponding to the noise from the posterior distribution and $f_i$ its mean, the first independence assumption is equivalent to the requirement that $f_i = \alpha' y + \epsilon_i$ with $\epsilon_i \perp y$ for all $i$. The second independence assumption is an intuitive expression of the constraint that $\epsilon_i \perp \epsilon_j$ in the linear-algebraic sense of independence, and that $\zeta_i^j$ is sampled independently (in the probabilistic sense) for all $i$ and $j$. 

We note that our lower bound for each model in the linear regression setting is equal to $\mathbb{E} \sum_{i=1}^N \|f_k(x_i) + \zeta_i - y_i\|^2 + c$ where $c$ is a fixed normalizing constant. By the previous Lemma, we know that the linear regression solution $w^*$ based on the posterior means satisfies, $\max_i w^*_i = \max_i \mathcal{L}(\model_i)$. It is then straightforward to extend this result to the noisy setting.
\begin{align}
    \mathbb{E}[ \|\Phi w - y\|^2] &= \mathbb{E}[\|(\Phi_y + \Phi_\perp + \zeta)w - y\|^2] \\
    &= \mathbb{E}[((\Phi_y + \Phi_\perp + \zeta)w - y)^\top ((\Phi_y + \Phi_\perp + \zeta)w - y)] \\
    &= \|\Phi_y w - y\|^2 + w^\top \Phi_\perp ^\top  \Phi_\perp w + \mathbb{E}[w^\top \zeta^\top \zeta w] \\
    &=  (w^\top \mathbbm{1} - \alpha)^2\|y\|^2 + w^\top \Phi_\perp ^\top  \Phi_\perp w + \mathbb{E}[w^\top \zeta^\top \zeta w] \\
    &= (w^\top \mathbbm{1} - \alpha)^2\|y\|^2  + \sum w_i^2( \|\Phi_\perp[:, i] \|^2 + \|\zeta_i\|^2)
\end{align}
We again note via the same reasoning as in the previous Lemma that the model with the greatest lower bound will be the one which minimizes $\|\Phi_\perp[:, i]\|^2 + \|\zeta_i\|^2$, and that the weight given to index $i$ will be inversely proportional to this term.

It only remains to show that for each model $i$, the model which maximizes $\mathcal{L}(M_i)$ will also minimize $\|\Phi_\perp[:, i]\|^2 + \|\zeta_i\|^2$. This follows precisely from the Gaussian likelihood assumption. As we showed previously 
\begin{align}
    \mathcal{L}(\data | \model_i) = \mathbb{E}[\sum \log P(y_i | \data_{<i})] &\propto - 
\sum \mathbb{E}[\ell_2(y_i - \hat{y}_i] \\
    &= [\| y - \mu\|^2 + \mathbb{E}[\|\hat{y} - \mu \|^2] \\
    &= \alpha\|y\|^2 + \|\Phi_\perp[:, i]\|^2 + \mathbb{E}[\|\zeta_i\|^2]
\end{align}
and so finding the model $\model_i$ which maximizes $\mathcal{L}(\data, \model_i)$ is equivalent to picking the maximal index $i$ of $w^*$ which optimizes the expected loss of the least squares regression problem.
\end{proof}
\clearpage

\section{Experiments}

\subsection{Experimental details: Model Selection using Trajectory Statistics } \label{sec:ex_ms_blr_synthetic_data}

\begin{figure}
    \centering
    \includegraphics[width=0.325\linewidth]{figures/model_selection_linear/feature_selection.png}
    \includegraphics[width=0.325\linewidth]{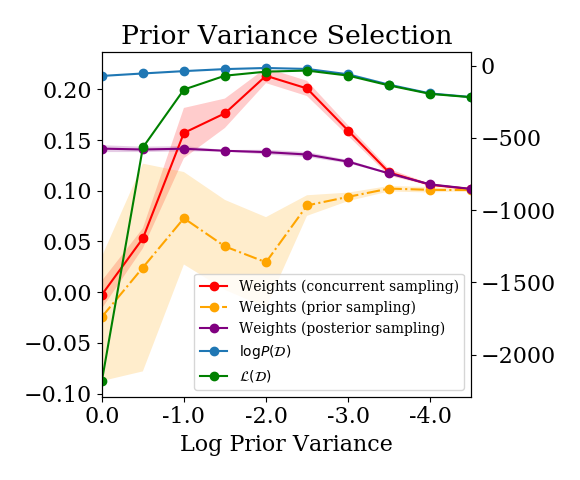}
    \includegraphics[width=0.325\linewidth]{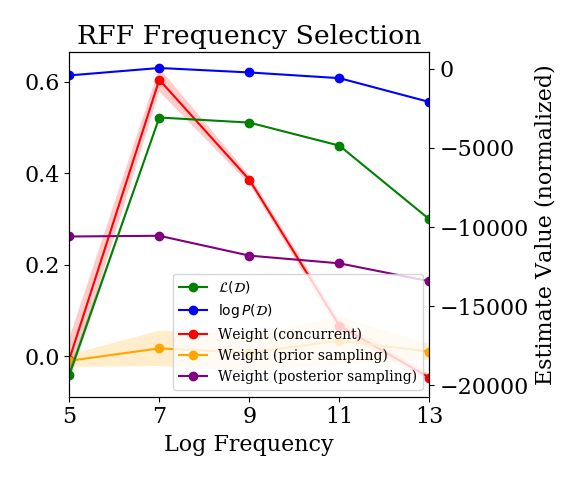}
    \hfill 
    \caption{Relative rankings given by optimize-then-prune, ML, and estimated $\mathcal{L}(\data)$. Left: feature selection. Middle: prior variance selection. Right: RFF frequency selection. Rankings are consistent with what our theoretical results predict. Results are averaged over $5$ runs.
    }
    \label{fig:app_ml_v_weight}
\end{figure}

We consider 3 model selection settings in which to evaluate the practical performance of our estimators. In \textbf{prior variance selection} we evaluate a set of BLR models on a synthetic linear regression data set. Each model $\mathcal{M}_i$ has a prior distribution over the $d$ parameters of the form $w \sim \mathcal{N}(0, \sigma_i^2 I_d)$ for some $\sigma_i^2$, and the goal is to select the optimal prior variance (in other words, the optimal regularization coefficient). We additionally evaluate an analogous initialization variance selection method on an NTK network trained on a toy regression dataset. In \textbf{frequency (lengthscale) selection} we use as input a subset of the handwritten digits dataset MNIST given by all inputs labeled with a 0 or a 1. We compute random Fourier features (RFF) of the input to obtain the features for a Bayesian linear regression model, and perform model selection over the frequency of the features (full details on this in the appendix). This is equivalent to obtaining the lengthscale of an approximate radial basis function kernel. In \textbf{feature dimension selection}, we use a synthetic dataset \citep{wilson2020bayesian} of the form $(\textbf{X}, \textbf{y})$, where $x_i = (y_i + \epsilon_1,  y_i + \dots, y_i + \epsilon_{15}, \epsilon_{16}, \dots, \epsilon_{30})$. We then consider a set of models $\{\model_k\}$ with feature embeddings $\phi_k(x_i) = x_i[1, \dots, k]$. The optimal model in this setting is the one which uses exactly the set of `informative' features $x[1, \dots, 15]$. 

The synthetic data simulation used in this experiment is identical to that used in \citep{wilson2020bayesian}. Below, we provide the details. 

Let $k$ be the number of informative features and $d$ the total number of features. We generate a datapoint $\data_i  = \{x_i,y_i\}$ as follows:
\begin{enumerate}
    \item {Sample $y_i$}: $y_i \sim U([0, 1])$
    \item {Sample $k$ informative features}: $x_{i,j} \sim N(y_i, \sigma_0) \quad \forall j \in 1, \dots k$
    \item {Sample $\max(d-k,0)$ noise features}: $x_{i,k+j} \sim N(0, \sigma_1) \quad \forall j \in 1, \dots d-k$
    \item {Concatenate the features}: $X_i= [x_{i,1}, \dots x_{i,d}]$
\end{enumerate}

We set $\sigma_0= \sigma_1=1$, $k = 15$, $n = 30$, and let $d$ vary from $5$ to $n$. We then run our estimators on the Bayesian linear regression problem for each feature dimension, and find that all estimators agree on the optimal number of features, $k$.

To compute the random fourier features used for MNIST classification, we vectorize the MNIST input images and follow the procedure outlined by \citep{rahimi2008random} (Algorithm 1) to produce RFF features, which are then used for standard Bayesian linear regression against the binarized labels. The frequency parameter (which can also be interpreted as a transformation of the lengthscale of the RBF kernel approximated by the RFF model) is the parameter of interest for model selection. Results can be found in Figure~\ref{fig:app_ml_v_weight}.

We additionally provide an analogue to our evaluation of model selection in NTK-GPs, with the change in the log marginal likelihood plotted instead of $\mathcal{L}(\data)$. We obtain analogous results, as can be seen in Figure~\ref{fig:app_ntk}.
\begin{figure}
\centering
\includegraphics[width=0.6\textwidth]{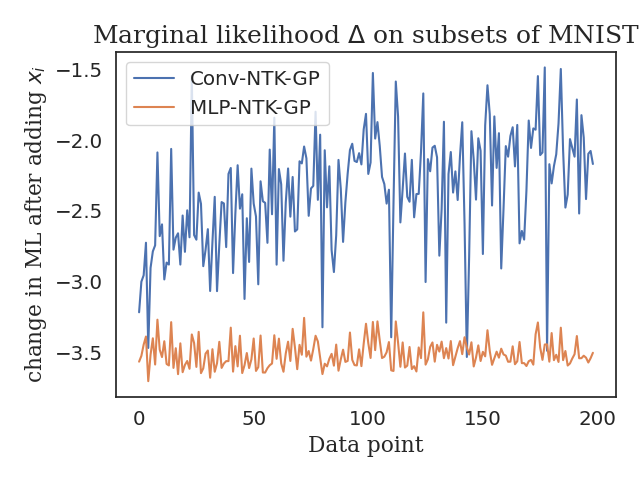}
\caption{Evaluation of change in log ML after data point $i$ is added for NTK-GPs on a random subset of MNIST.}
\label{fig:app_ntk}
\end{figure}

\subsection{Experimental details: Bayesian model comparison} \label{sec:exp_details_sgd_dnn}

Here we provide further detail of the experiment in Section 4.2.1.
The goal of the experiment is to determine whether the connection between sum-over-training losses (SOTL) and model evidence observed in the linear regression setting extends to DNNs. In particular, the two sub-questions are: 
\begin{enumerate}
    \item Do models with a lower SOTL generalize better?
    \item Are these models favoured by SGD? 
\end{enumerate}

To answer these questions, we train a linear combination of NNs. We can answer subquestion [1] by plotting the correlation between SOTL and test performance of an individual model. Further, we address subquestion [2] by considering the correlation between test loss and linear weights assigned to each model.

Below we explain the set-up of the linear combination in more detail. We train a variety of deep neural networks along with a linear `ensemble' layer that performs a linear transformation of the concatenated logit outputs\footnote{These are pre-softmax outputs. To obtain the predicted probability of a class, they are fed through a softmax function.} of the classification models. Let $h_m(x_i)$ be logit output of model $m$ for input $x_i$, $\ell(y_i, h_i)$ be the loss for point $i$ (where $h_i$ is a logit) and $w_{m,t}$ be the weight corresponding to model $m$ at time step $t$. 

We consider two training strategies: we first train models individually using the cross-entropy loss between each model's prediction and the true label, only cross-entropy loss of the final ensemble prediction to train the linear weights. Mathematically, we update the models using the gradients
\begin{equation}
    \frac{\delta}{\delta \theta_m} \ell(y_i, h_m(x_i)),
\end{equation}
and the `ensemble' weights using
\begin{equation}
    \frac{\delta}{\delta w_m} \ell( y_i, \sum_m w_m h_m(x_i)).
\end{equation}
We refer to this training scheme as \textit{Parallel Training} as the models are trained in parallel.  We also consider the setting in which the models are trained using the cross entropy loss from the ensemble prediction backpropagated through the linear ensemble layer, i.e. the model parameters are now updated using: 
\begin{equation}
    \frac{\delta}{\delta \theta_m} \ell(y_i, \sum_m w_m h_m(x_i)).
\end{equation}
We refer to this scheme as the \textit{Concurrent Training}. 

We train a variety of different MLPs (with varying layers,and nodes) and convolutional neural networks (with varying layers, nodes and kernels) on FashionMNIST using SGD until convergence.

\subsection{Experimental Details: SGD upweights submodels that perform well} \label{sec:exp_details_sgd_submodels}
Below we provide further details of the experiment in Section 4.2.2. The goal of the experiment is to determine whether SGD upweights sub-models that fit the data better.  

We train a MLP network (with units $200, 200, 10$) on FashionMMIST using SGD until convergence.  After training is completed, for every class of $y$, we rank all nodes in the penultimate layer by the norm of their absolute weight (in the final dense layer).  We group the points into submodels according to their ranking --  the $k$ nodes with the highest weights are grouped together, next the $k+1, \dots 2k$ ranked nodes are grouped, etc. We set $k=10$. 

We determine the performance of a submodels by training a simple logistic classifier to predict the class of an input, based on the output of the submodel. To measure the performance of the classifier, we use the cross-entropy loss. To capture the equivalent notion of the AUC, we estimate the performance of the sub-models throughout training, and sum over the estimated cross-entropy losses. 

Below, we show additional plots for the \textit{parallel} and \textit{concurrent} training schemes. The results are the same to those presented in the main text, and we observe  [1] a negative correlation between test performance and ensemble weights and [2] a strong correlation between SOTL and average test cross-entropy.

\begin{figure}[H]
    \begin{minipage}{.27\textwidth}
    \includegraphics[ width=\linewidth]{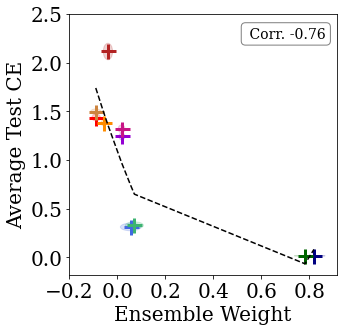}
    \end{minipage}
    \begin{minipage}{.27\textwidth}
    \includegraphics[ width=\linewidth]{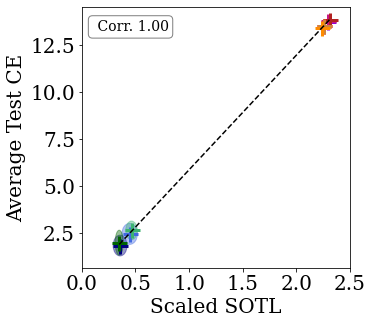}
    \end{minipage}
    \begin{minipage}{.27\textwidth}
    \includegraphics[ width=\linewidth]{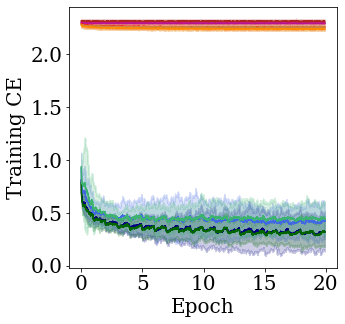}
    \end{minipage}
    \begin{minipage}{.17\textwidth}
    \includegraphics[ width=\linewidth]{figures/model_selection_dnn/legend.png}
    \end{minipage}
    \caption{\textbf{Linear combinations of DNNs on FashionMNIST.}  Left: ensemble weights versus the test loss for parallel training; we observe a negative correlation. Middle: SOTL (standardized by the number of training samples) versus test loss for concurrent and concurrent training. We observe a strong correlation indicating that the SOTL generalizes well. Right: training curves for the different models in concurrent training schemes. All results are averaged over $10$ runs, and standard deviations are shown by the shaded regions around each observation. The model parameters, given in the parentheses, are the number of layers ($l$), nodes per layer ($n$) and kernel size ($k$), respectively. }
    \label{fig:mod_select_dnn_parallel}
\end{figure}

However, similarly to the linear setting, the difference in assigned weights is magnified in the concurrent training scheme. Here we find that in the concurrent training scheme, the ensemble focuses on training the CNNs (as can be seen from the training curve in Figure \ref{fig:mod_select_dnn} in the main text). This is likely because CNNs are able to learn more easily, leading to larger weights earlier on.

\newpage 
Above, we show additional plots to those shown in Figure \ref{fig:sgd_submodel}, Section \ref{sec:sgd_submodel}. Figure \ref{fig:sgd_submodel_full} shows the results for the all FashionMNIST classes, and Figure \ref{fig:sgd_submodel_full_cifar} shows the results for experiment on CIFAR-10.  From both, we see that SGD assigns higher weights to subnetworks that perform better. 

\begin{figure}
    \centering
    \includegraphics[ width=\linewidth]{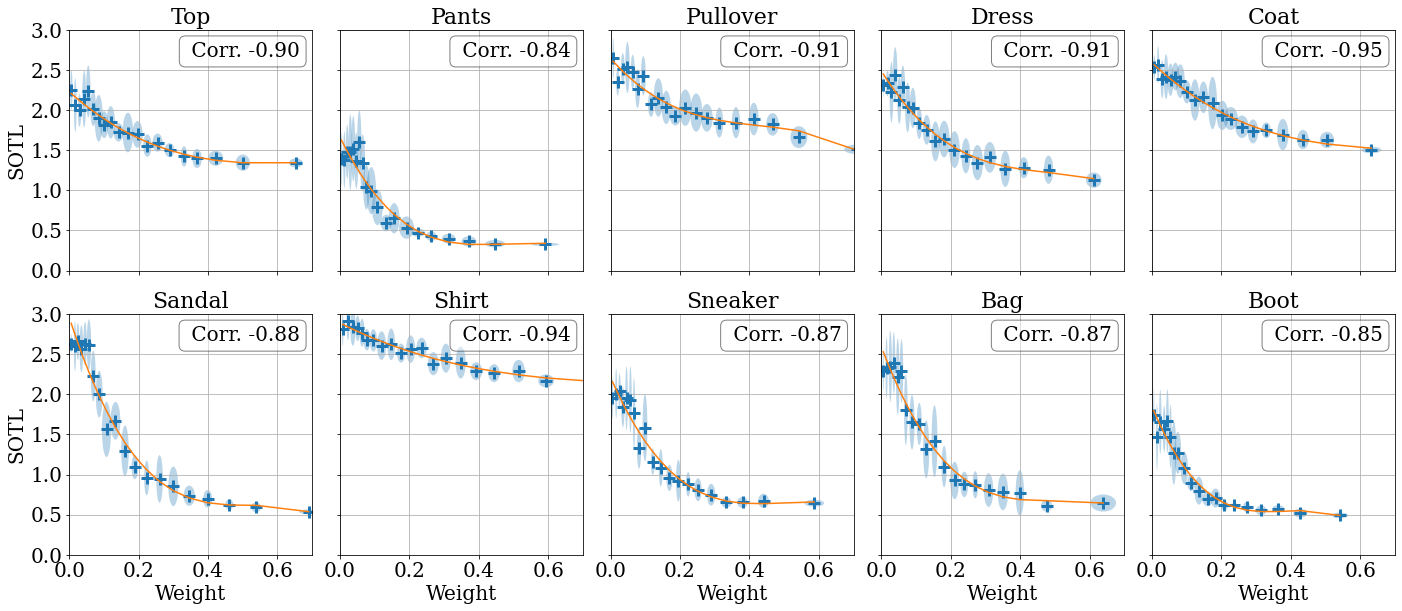}
    \caption{Weight assigned to subnetwork by SGD in a deep neural network (x-axis) versus the subnetwork performance (estimated by the sum of cross-entropy, on the y-axis) for different FashionMNIST classes. The light blue ovals denote depict $95\%$ confidence intervals, estimated over 10 seeds (i.e. 2$\sigma$ for both the weight and SOTL).  The orange line depicts the general trend.}
    \label{fig:sgd_submodel_full}
\end{figure}

\begin{figure}
    \centering
    \includegraphics[ width=\linewidth]{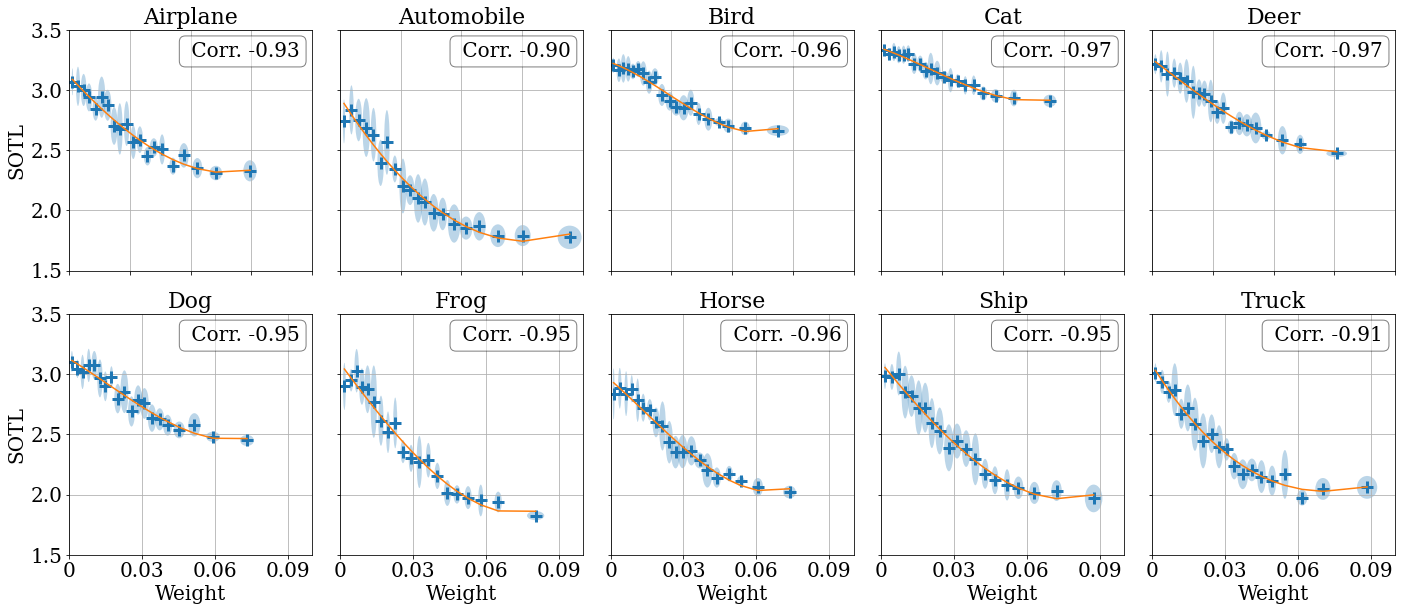}
    \caption{Weight assigned to subnetwork by SGD in a deep neural network (x-axis) versus the subnetwork performance (estimated by the sum of cross-entropy, on the y-axis) for different CIFAR-10 classes. The light blue ovals denote depict $95\%$ confidence intervals, estimated over 10 seeds (i.e. 2$\sigma$ for both the weight and SOTL).  The orange line depicts the general trend.}
    \label{fig:sgd_submodel_full_cifar}
\end{figure}

\vfill 
\newpage 

\end{document}